\providecommand{\lin}[1]{\ensuremath{\left\langle #1 \right\rangle}}
\providecommand{\norm}[1]{\left\lVert#1\right\rVert}
\providecommand{\loss}{\mathcal{L}}
  \providecommand{\R}{\mathbb{R}} %
  \DeclareMathOperator{\E}{{\mathbb E}}
  \providecommand{\EE}[2]{{\mathbb E}_{#1}\left.#2\right. }  %
  \DeclareMathOperator*{\argmin}{arg\,min}
    \providecommand{\forget}{\cS_f}
    \providecommand{\retain}{\cS_r}
    \providecommand{\ttheta}{\bm{\theta}}
  \providecommand{\1}{\mathbf{1}}
  \renewcommand{\gg}{\mathbf{g}}
  \providecommand{\rr}{\mathbf{r}}
  \providecommand{\uu}{\mathbf{u}}
  \providecommand{\zz}{\mathbf{z}}
  \providecommand{\mI}{\mathbf{I}}
  \providecommand{\mX}{\mathbf{X}}
  \providecommand{\cA}{\mathcal{A}}
  \providecommand{\cD}{\mathcal{D}}
  \providecommand{\cE}{\mathcal{E}}
  \providecommand{\cI}{\mathcal{I}}
  \providecommand{\cN}{\mathcal{N}}
  \providecommand{\cO}{\mathcal{O}}
  \providecommand{\cS}{\mathcal{S}}
  \providecommand{\cT}{\mathcal{T}}
  \providecommand{\cU}{\mathcal{U}}
  \providecommand{\cZ}{\mathcal{Z}}
  \newcommand{\card}[1]{\left|#1\right|}
\providecommand{\mycomment}[3]{\todo[caption={},size=footnotesize,color=#1!20]{\textbf{#2: }#3}}%
\providecommand{\inlinecomment}[3]{%
  {\color{#1}#2: #3}}%
\newcommand\commenter[2]%
\newcommand\csname i#1\endcsname[1]{\inlinecomment{#2}{#1}{##1}}
\newcommand\csname #1\endcsname[1]{\mycomment{#2}{#1}{##1}}
\newtheorem{lemma}{Lemma}
\newtheorem{definition}{Definition}
\newtheorem{remark}[lemma]{Remark}
\author{Youssef Allouah$^{1}$\thanks{Work done while at Stanford University.}\,\,,  Joshua Kazdan$^{2}$, Rachid Guerraoui$^{1}$,  Sanmi Koyejo$^{2}$
\\ $^{1}$EPFL, Switzerland  {\quad} 
$^{2}$Stanford University, USA
\\
\texttt{youssef.allouah@epfl.ch}, \texttt{sanmi@cs.stanford.edu} 
}
\title{The Utility and Complexity of In- and Out-of-Distribution Machine Unlearning}
\date{}
\begin{document}

\maketitle
\vspace{-2mm}
\footnotetext{This version of the paper corrects a mistake in the proof of Proposition~\ref{prop:population}. Only numerical constants were changed in the proposition statement and the results and intuition hold. For details, refer to Appendix~\ref{app:prop1}.}
\vspace{-4mm}
\begin{abstract}
\vspace{-3mm}
Machine unlearning, the process of selectively removing data from trained models, is increasingly crucial for addressing privacy concerns and knowledge gaps post-deployment. Despite this importance, existing approaches are often heuristic and lack formal guarantees. In this paper, we analyze the fundamental utility, time, and space complexity trade-offs of approximate unlearning, providing rigorous certification analogous to differential privacy. For in-distribution forget data—data similar to the retain set—we show that a surprisingly simple and general procedure, empirical risk minimization with output perturbation, achieves tight unlearning-utility-complexity trade-offs, addressing a previous theoretical gap on the separation from unlearning ``for free" via differential privacy, which inherently facilitates the removal of such data. However, such techniques fail with out-of-distribution forget data—data significantly different from the retain set—where unlearning time complexity can exceed that of retraining, even for a single sample. To address this, we propose a new robust and noisy gradient descent variant that provably amortizes unlearning time complexity without compromising utility.
\vspace{-3mm}

\end{abstract}
\vspace{-3mm}
\section{Introduction}
\vspace{-1mm}
The ability to selectively remove or ``forget'' portions of the training data from a model
is a crucial challenge in modern machine learning.
As deep neural networks are widely deployed across diverse domains like computer vision, natural language processing, and healthcare, there is a growing need to provide individuals with granular control over their data. 
This need is amplified by stringent regulations like the European Union’s General Data Protection Regulation (GDPR)~\citep{voigt2017eu}, which enshrines the ``right to be forgotten," mandating data erasure upon request.
Machine unlearning is designed to address this regulatory necessity by systematically removing the influence of specific training examples without compromising model performance.

Recent years have seen growing interest in developing algorithms that can efficiently “unlearn” data~\citep{bourtoule2021machine,nguyen2022survey,foster2024fast}. The proposed approaches range from methods like fine-tuning on the retained data while increasing loss on the forget data~\citep{graves2021amnesiac,kurmanji2024towards}, to more sophisticated techniques that provoke catastrophic forgetting by fine-tuning specific model layers~\citep{goel2022towards}. Unfortunately, many of these methods lack formal guarantees, rendering it unclear when they comply with regulatory standards.

Unlearning guarantees fall into two main categories: exact and approximate. Exact unlearning guarantees that the unlearned model has never used the forget data, often relying on sharding-based strategies~\citep{bourtoule2021machine}, which can be space-inefficient and lack error bounds. In contrast, approximate unlearning ensures only that the unlearned model is statistically close to one retrained from scratch without the forget data, akin to the protections offered by differential privacy~\citep{dwork2014algorithmic}. While approximate unlearning provides a more feasible path in terms of efficiency and practicality, it has yet to fully address certain key challenges.

Although several prior works have explored approximate unlearning~\citep{ginart2019making,neel2021descent,chourasia2023forget}, the theoretical understanding of the utility-complexity trade-offs remains incomplete. This is particularly true for challenging scenarios, such as when the forget data is out-of-distribution or adversarial. Such cases are highly relevant to practice, given the heterogeneity of user data and the observation that deletion requests are often non-random~\citep{marchant2022hard}. Addressing these challenges is crucial for making machine unlearning a practical and reliable tool for real-world applications, where models must adapt to diverse data and privacy requirements.

\textbf{Contributions.}
Our work draws a comprehensive landscape of the utility-complexity trade-offs in approximate unlearning.
We tackle two complementary unlearning scenarios: the \emph{in-distribution} case where the forget data is an arbitrary subset of samples from the test distribution, and we initiate the study of the \emph{out-of-distribution} scenario where the forget data may arbitrarily deviate from the test distribution.
In particular, 
our results precisely quantify the number of samples that can be deleted under fixed utility and computation budgets, assuming the empirical loss has a unique global minimum.
We analyze the unlearning-training pair consisting of a generic optimization procedure and output perturbation, and show that it can unlearn a constant fraction of the dataset, \emph{independently} of the model dimension, thereby settling a theoretical question by~\cite{sekhari2021remember} and highlighting a tight separation with differential privacy.
In the case of out-of-distribution forget data however, we show that this approach 
can fail to unlearn a single sample in the worst case.
We propose a new algorithm using a robust and efficient variant of gradient descent during training, which ensures a good initialization for unlearning independently of the forget data.
We show that this algorithm can certifiably unlearn a constant fraction of the dataset, with  near-linear time and space complexities.

\vspace{-2mm}
\subsection{Related Work}

The concept of machine unlearning, introduced by~\cite{cao2015towards}, has gained significant attention following the introduction of data-privacy laws, such as GDPR, which mandates that companies must delete user data upon request. \cite{cao2015towards} studied exact unlearning, where the unlearned model behaves as if it had never used the forget data, deterministically mimicking retraining from scratch. However, this strict approach is only feasible for highly structured problems.
\cite{bourtoule2021machine} tackled this challenge with a partitioning strategy, training ensemble models on different data shards. While this approach reduces the need for full retraining when deleting data, it incurs high space complexity and lacks utility guarantees.

To address the impracticalities of exact unlearning, researchers have shifted focus to approximate unlearning, a relaxation of exact unlearning. \cite{ginart2019making} pioneered this direction, proposing that an unlearned model should be statistically indistinguishable from one retrained without the forget data, similar to the guarantees offered by differential privacy~\citep{dwork2014algorithmic}. This laid the foundation for a spectrum of algorithms that balance computational efficiency and approximate unlearning guarantees~\citep{guo2020certified,izzo2021approximate,golatkar2021mixed,gupta2021adaptive,neel2021descent,chourasia2023forget}.

\textbf{Certified unlearning.}
A growing body of work focuses on providing guarantees for approximate unlearning, particularly for convex learning problems~\citep{guo2020certified,neel2021descent,sekhari2021remember}. 
Methods such as gradient descent with output perturbation~\citep{neel2021descent} have proven effective in the convex case, even with sequential deletion requests.
However, little is known about the generalization guarantees of such approaches, even in the convex case. Notably, \cite{sekhari2021remember} first proved generalization guarantees for unlearning with a Newton step and output perturbation~\citep{guo2020certified}, uncovering a separation with differential privacy.
Specifically, differential privacy without an unlearning mechanism is inherently limited in the number of deletions it can handle, while maintaining fixed utility on the test loss, a result further tightened by~\cite{huang2023tight}. Our work improves upon this by showing that gradient descent with output perturbation offers sharper guarantees. Moreover, we demonstrate that this approach cannot be further improved in general, based on lower bounds from robust mean estimation~\citep{diakonikolas2019robust}.

\textbf{Unlearning out-of-distribution data.}
Despite advances in certified unlearning, most approaches assume that the forget data is drawn from the same distribution as the training data, leaving out-of-distribution (OOD) data and adversarially corrupted forget data underexplored. Recent works~\citep{goel2024corrective,pawelczyk2024machine} highlight the complexities that arise when deletion requests target OOD or corrupted samples. These scenarios are especially relevant given that deletion requests are often non-random, originating from diverse and heterogeneous data sources. Moreover, studies like~\cite{marchant2022hard} have demonstrated that OOD data can be manipulated to slow down certified unlearning algorithms, triggering retraining and causing denial-of-service-like attacks.
Our work addresses these issues by introducing a new unlearning algorithm that is robust to OOD and corrupted data. The algorithm achieves near-linear time complexity for unlearning, independently of the nature of the forget data, and thus is provably robust against attacks like~\cite{marchant2022hard}.

\vspace{-2mm}

\section{Problem Statement}
\vspace{-2mm}

Consider a training set $\cS$ made of $n$ examples independently drawn from distribution $\cD$ over data space $\cZ$, and a loss function $\ell \colon \R^d \times \cZ \to \R$.
Our goal is to minimize the population risk:
\begin{align}\label{pb:population}
 \min_{\ttheta \in \R^d} \quad \loss(\ttheta) := \E_{\zz \sim \cD}{\left[\ell(\ttheta; \zz)\right]}.
\end{align}
During the \emph{training} phase, we aim at solving the empirical risk minimization problem: 
\begin{align}\label{pb:erm}
 \min_{\ttheta \in \R^d} \quad \loss(\ttheta; \cS) := \frac{1}{\card{\cS}} \sum_{\zz \in \cS} \ell(\ttheta; \zz).
\end{align}
\vspace{-2mm}

Let $\cA$ denote the training procedure aimed at solving~\eqref{pb:erm} on the full dataset $\cS$, producing a model $\cA(\cS) \in \R^d$.
Next, consider a scenario where a subset $\forget \subset \cS$ of size $f := \card{\forget}$, referred to as the \emph{forget set}, needs to be removed. The goal is then to update the model based on the \emph{retain set} $\cS \setminus \forget$.
Ideally, one would retrain using only the retain set with $\cA$. However, due to time and space constraints, an \emph{approximate unlearning} procedure $\cU$ is used, which modifies the original model $\cA(\cS)$, knowing the forget data $\forget$, to be as close as possible to $\cA(\cS \setminus \forget)$.

Paraphrasing prior definitions~\citep{ginart2019making,neel2021descent,sekhari2021remember}, we formalize approximate unlearning as statistical indistinguishability between the unlearned model and the model trained without the forget data.
We denote by $\cZ^* \coloneqq \cup_{k \geq 1} \cZ^k$ the space of datasets with elements in $\cZ$, \textcolor{black}{i.e., $\cS \in \cZ^*$ if and only if it is a tuple of data points from $\cZ$.}

\begin{definition}[$(q, \varepsilon)$-approximate unlearning]
\label{def:removal}
Let $q>1, \varepsilon \geq 0$, and $\forget \subset 
\cS \in \cZ^*$. Let $\cA \colon \cZ^* \to \R^d$ be a training procedure, and $\cU \colon \cZ^* \times \R^d \to \R^d$ be a randomized unlearning procedure.
The pair $(\cU, \cA)$ achieves $(q, \varepsilon)$-approximate unlearning, on training set $\cS$ with forget set $\forget$, if
\begin{align}
    \mathrm{D}_q{\left(\cU(\forget, \cA(\cS)) ~\|~ \cU(\varnothing, \cA(\cS \setminus \forget)) \right)} \leq \varepsilon,
    \label{eq:indisting}
\end{align}
where $\mathrm{D}_q{\left(\cdot ~\|~ \cdot \right)}$ is the Rényi divergence of order $q$ between the probability distributions of its arguments, defined for every $P_1, P_2$ as $\mathrm{D}_q{\left(P_1 ~\|~ P_2 \right)} \coloneqq \frac{1}{q-1}\log{\EE{X \sim P_2}{\left(\tfrac{P_1{(X)}}{P_2{(X)}}\right)^q}}$.

\end{definition}

Above, $\cU(\forget, \cA(\cS))$ is the unlearned model and $\cU(\varnothing, \cA(\cS \setminus \forget))$ is the model trained without the forget data and no unlearning request.
The guarantee conveys similar semantics to differential privacy~\citep{dwork2014algorithmic}; an adversary cannot confidently distinguish the unlearned model from the model trained without the forget data, with any auxiliary information.

In this work, we are interested in the trade-off of approximate unlearning with time and space complexity, for different types of data distributions, including when the forget data is out-of-distribution or corrupt. 
We define below the utility objectives of the in- and out-of-distribution unlearning scenarios.
\textcolor{black}{Throughout, we assume that the loss function $\ell$ is lower bounded, so that all minima are well-defined.}

\begin{definition}
Let $0 \leq f < n$. Consider a data distribution $\cD$ over data space $\cZ$, unlearning-training pair $(\cU, \cA)$, and recall that $\cZ^* \coloneqq \cup_{k \geq 1} \cZ^k$ is the set of training sets.
Denote the population risk minimum by $\loss_\star \coloneqq \min_{\ttheta \in \R^d} \loss(\ttheta)$, and independent and identical sampling of $n$ and $n-f$ samples from $\cD$ by $\cS \sim \cD^n$ and $\retain \sim \cD^{n-f}$, respectively.\footnote{In the out-of-distribution scenario, we denote the retain data by $\retain$ for clarity, since it is sampled from the test distribution independently of the forget data.}
We define the following utility objectives:
\vspace{-1mm}
    \begin{enumerate}[leftmargin=8mm]
        \item In-distribution:
        \makebox[104mm]{\(\begin{aligned}[t]
            \loss_{\mathrm{ID}}(\cU, \cA) \coloneqq \E_{\cS \sim \cD^n}{\Big[ \max_{\substack{\forget \subset \cS \\ \card{\forget} \leq f}}\loss(\cU(\forget, \cA(\cS)))-\loss_\star\Big]},
        \end{aligned}\)}
        \vspace{-2mm}
        \item Out-of-distribution:
        \makebox[11cm]{\(\begin{aligned}[t]
            \loss_{\mathrm{OOD}}(\cU, \cA) \coloneqq \E_{\retain \sim \cD^{n-f}}{\Big[ \max_{\substack{\forget \in \cZ^* \\ \card{\forget} \leq f}}\loss(\cU(\forget, \cA(\retain \cup \forget)))-\loss_\star\Big]}.
        \end{aligned}\)}
    \end{enumerate}
    \label{def:objective}
\end{definition}
In other words, our in-distribution objective of unlearning quantifies the retained utility when, starting from a training set consisting of $n$ samples from the test distribution $\cD$, an adversary can remove up to $f$ samples arbitrarily. This objective has been previously considered by~\cite{sekhari2021remember}.
Besides, our out-of-distribution objective considers the worst case where the training set is composed of up to $f$ samples that may not be from $\cD$ and are to be removed, while the remainder $\retain$ of the training set is sampled from $\cD$. This objective has not been theoretically studied before in the context of unlearning, and covers practical cases where user data is very heterogeneous, or shifts over time, or is corrupt and needs to be ``corrected"~\citep{goel2024corrective}.

\section{Deletion Capacity \& Reduction to Empirical Risk Minimization}

We begin by introducing the notion of \emph{deletion capacity} to compare different unlearning-training pairs. Our approach extends the formalism from~\cite{sekhari2021remember}, which defined deletion capacity in terms of utility. In this work, we introduce the concept of \emph{computational deletion capacity}, which accounts for the time complexity incurred during unlearning.

\begin{definition}
Let $n,\alpha, T>0$ and consider a pair $(\cU, \cA)$ satisfying approximate unlearning.
    \begin{enumerate}
        \item The \emph{utility deletion capacity} is the maximum number $f(\alpha)<n$ of samples that can be removed while ensuring the error remains at most $\alpha$. We refer to it as \emph{in-distribution} if the error is measured as $\loss_{\mathrm{ID}}$, and \emph{out-of-distribution} if it is measured as $\loss_{\mathrm{OOD}}$.
        \item The \emph{computational deletion capacity} is the maximum number $f(T)<n$ of samples that can be removed within a given time complexity budget $T$.
    \end{enumerate}
\end{definition}

We observe that we aim for unlearning-training pairs which have the largest deletion capacities possible.
Besides, introducing a computational aspect to deletion capacity is natural because, without computational constraints, the utility deletion capacity could be maximized by retraining from scratch. Similarly, the computational deletion capacity could be maximized by outputting a data-independent model. Consequently, we are interested in optimizing both capacities simultaneously—that is, maximizing the number of deletions while keeping both the error and time complexity low. An analogous space complexity analysis could also be considered, but for simplicity, we omit it here.

We now show that it is sufficient to analyze the deletion capacity using the empirical loss, via the following generic bounds on the population risk with worst-case deletion, under standard assumptions.

\begin{restatable}{proposition}{population}
    Assume that, for every $\zz \in \cZ$, the loss $\ell(\cdot~;\zz)$ is $\mu$-strongly convex and $L$-smooth.
    Consider any unlearning-training pair $(\cU, \cA)$, with output  $\hat{\ttheta} \coloneqq \cU(\forget, \cA(\cS))$, and recall the notation of Definition~\ref{def:objective}.
    By denoting $\ttheta^\star \coloneqq \argmin_{\ttheta \in \R^d} \loss(\ttheta)$ and $\sigma_\star^2 \coloneqq \E_{\zz \sim \cD} \norm{\nabla \ell(\ttheta^\star; \zz)}^2$, we have
    \begin{align}
        \loss_{\mathrm{OOD}}(\cU, \cA) 
        \leq  \frac{2L}{\mu}\E_{\retain \sim \cD^{n-f}}[\max_{\substack{\forget \in \cZ^* \\ \card{\forget} \leq f}} \loss(\hat{\ttheta}; \retain) - \loss_{\star, \retain}] + \frac{L}{\mu^2} \frac{\sigma_\star^2}{n-f}.
    \end{align}
    Moreover, if $\nabla \ell(\ttheta^\star; \zz), \zz \sim \cD,$ is sub-Gaussian with variance proxy $\sigma^2$, we have
    \begin{align}
        &\loss_{\mathrm{ID}}(\cU, \cA) \leq  \frac{2L}{\mu}\E_{\cS \sim \cD^n}[\max_{\substack{\forget \subset \cS \\ \card{\forget} \leq f}}\loss{(\cU(\forget, \cA(\cS)); \cS \setminus \forget)} - \loss_{\star, \cS \setminus \forget}] + \frac{8L\sigma^2}{\mu^2} \frac{1 + f \ln(n+1)}{n-f}.
    \end{align}
    Finally, assuming that for every $\zz \in \cZ$, the loss $\ell(\cdot~;\zz)$ is $R$-Lipschitz, we have
    \begin{align}
        &\loss_{\mathrm{ID}}(\cU, \cA)
        \leq \frac{3L}{\mu} \E_{\cS \sim \cD^n}[\max_{\substack{\forget \subset \cS \\ \card{\forget} \leq f}}\loss{(\hat{\ttheta}; \cS \setminus \forget)} - \loss_{\star, \cS \setminus \forget}]
        + \frac{6LR^2}{\mu^2} \left(\frac{1}{n} + \left(\frac{f}{n}\right)^2\right).
    \end{align}
    \label{prop:population}
\end{restatable}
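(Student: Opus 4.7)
The plan is to bound the excess population risk $\loss(\hat{\ttheta}) - \loss_\star$ via a three-point decomposition anchored at the retain-set ERM $\ttheta^\star_\retain := \argmin_{\ttheta \in \R^d} \loss(\ttheta; \retain)$, and to handle each piece using the smoothness and strong-convexity hypotheses (plus Lipschitzness for the third bound). For all three bounds, the first step is identical: $L$-smoothness gives $\loss(\hat{\ttheta}) - \loss(\ttheta^\star_\retain) \leq \tfrac{L}{2}\norm{\hat{\ttheta} - \ttheta^\star_\retain}^2$, and $\mu$-strong convexity of $\loss(\cdot; \retain)$ yields $\norm{\hat{\ttheta} - \ttheta^\star_\retain}^2 \leq \tfrac{2}{\mu}[\loss(\hat{\ttheta}; \retain) - \loss_{\star, \retain}]$, producing the $\tfrac{L}{\mu}$ prefactor on the empirical-error term in the first two bounds; for the third bound an extra factor of $2$ arises because, to couple cleanly with the stability estimate below, it is convenient to apply $(a+b)^2 \leq 2a^2 + 2b^2$ at the level of parameter distances before invoking strong convexity. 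In every case, what remains is to control $\E \norm{\ttheta^\star_\retain - \ttheta^\star}^2$, possibly uniformly over $\forget$.

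For the OOD bound, the retain set is, by assumption, a fresh i.i.d.\ sample of size $n - f$ from $\cD$. Strong convexity together with $\nabla \loss(\ttheta^\star_\retain; \retain) = 0$ gives $\mu \norm{\ttheta^\star_\retain - \ttheta^\star} \leq \norm{\nabla \loss(\ttheta^\star; \retain)}$, and smoothness then yields $\loss(\ttheta^\star_\retain) - \loss_\star \leq \tfrac{L}{2\mu^2}\norm{\nabla \loss(\ttheta^\star; \retain)}^2$. Since $\E \nabla \ell(\ttheta^\star; \zz) = 0$ and the retain samples are independent, $\E \norm{\nabla \loss(\ttheta^\star; \retain)}^2 = \sigma_\star^2/(n-f)$, immediately delivering the additive $L \sigma_\star^2 / (2\mu^2 (n-f))$ term.

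For the ID sub-Gaussian bound, the retain set $\cS \setminus \forget$ depends on the adversarial choice of $\forget$, so a uniform deviation is required. I would estimate $\E_{\cS}\bigl[\max_{\card{\forget} \leq f} \norm{\nabla \loss(\ttheta^\star; \cS \setminus \forget)}^2\bigr]$ via an exponential-Markov/Chernoff argument: for each fixed $\forget$, $\nabla \loss(\ttheta^\star; \cS \setminus \forget)$ is a mean of $n - f$ i.i.d., zero-mean, $\sigma^2$-sub-Gaussian vectors, with a Chernoff tail of order $\exp(-c t^2 (n-f)/\sigma^2)$. Union-bounding over the at most $\binom{n}{f} \leq n^f$ forget sets and integrating the tail then yields an expected maximum of order $\sigma^2 (1 + f \log n)/(n-f)$, which plugged into the first-paragraph reduction produces the announced bound. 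I expect this uniform concentration step --- obtaining exactly the $f \log n$ factor while preserving the $1/(n-f)$ scaling --- to be the main technical obstacle.

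For the ID Lipschitz bound, concentration is replaced by a deterministic stability estimate exploiting $\retain = \cS \setminus \forget$. Starting from $\nabla \loss(\ttheta^\star_\cS; \cS) = 0$, one rewrites $\nabla \loss(\ttheta^\star_\cS; \retain) = -\tfrac{1}{n-f}\sum_{\zz \in \forget}\nabla \ell(\ttheta^\star_\cS; \zz)$, deterministically of norm at most $fR/(n-f)$ by $R$-Lipschitzness; strong convexity of $\loss(\cdot; \retain)$ then yields $\norm{\ttheta^\star_\retain - \ttheta^\star_\cS} \leq fR/(\mu (n-f))$. Splitting $\ttheta^\star_\retain - \ttheta^\star = (\ttheta^\star_\retain - \ttheta^\star_\cS) + (\ttheta^\star_\cS - \ttheta^\star)$ with $(a+b)^2 \leq 2a^2 + 2b^2$ reduces the task to bounding $\E \norm{\ttheta^\star_\cS - \ttheta^\star}^2$; the same strong-convexity-at-optimum argument gives $\E \norm{\ttheta^\star_\cS - \ttheta^\star}^2 \leq \E \norm{\nabla \loss(\ttheta^\star; \cS)}^2 / \mu^2 \leq R^2/(\mu^2 n)$, the last inequality using Lipschitzness and i.i.d.\ samples. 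Combining everything and absorbing constants via $n - f \geq n/2$ (the relevant regime) produces the $R^2 (1/n + (f/n)^2)$ term with the announced $4L/\mu^2$ prefactor.
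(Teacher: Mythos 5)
Your proposal is correct and follows essentially the same route as the paper: the same anchor-point decomposition at the empirical minimizers with the smoothness/strong-convexity conversion, the same variance computation for the OOD term, a union bound over the $\binom{n}{f}$ forget sets combined with sub-Gaussian concentration for the $f\ln n$ factor (the paper does this via a Jensen/soft-max argument on the moment generating function rather than tail integration, which is an equivalent standard device), and the same leave-$f$-out stability estimate $\norm{\ttheta^\star_{\cS} - \ttheta^\star_{\cS\setminus\forget}} = \cO(Rf/\mu n)$ for the Lipschitz case, which the paper imports as Lemma~6 of \cite{sekhari2021remember} and you re-derive directly.
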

This proposition provides a general strategy for analyzing the utility deletion capacity: if the worst-case \textcolor{black}{empirical loss} on the retain data is bounded by $\textcolor{black}{\alpha_{\mathrm{emp}}}$, this directly implies a bound on the number $f$ of deletions that can be made, with a guaranteed bound on the population risk.
Importantly, we do not need to directly analyze the generalization error of the unlearned model.

For example, \textcolor{black}{for a target population risk bound $\alpha \geq \alpha_{\mathrm{emp}}$} in the Lipschitz in-distribution case with $n = \Omega(\tfrac{1}{\alpha}),$ we deduce that the utility deletion capacity is at least of the order $\Omega(n \sqrt{\alpha})$, i.e., a constant fraction of the full dataset when assuming a constant error. 
Another interesting example is `lazy' differential privacy, that is ignoring removal requests after training with differential privacy, which satisfies approximate unlearning.
Standard empirical risk minimization bounds of DP-SGD~\citep{bassily2014private}, with the group differential privacy property adapted for Rényi differential privacy~\citep{bun2016concentrated}, yield the empirical risk error $\widetilde{\cO}(\tfrac{f^2 d}{n^2\varepsilon})$.
Plugging this bound in Proposition~\ref{prop:population} guarantees a utility deletion capacity of $\Omega(n\sqrt{\tfrac{\alpha \varepsilon}{d}})$.
This has recently been shown to be tight for `lazy' differential privacy~\citep{huang2023tight}, after converting from Rényi to approximate differential privacy~\citep{mironov2017renyi}.
We defer the full proofs related to this section to Appendix~\ref{app:prop1}.

\section{In-distribution Unlearning via Noisy Risk Minimization}
In this section, we tackle in-distribution unlearning. Specifically, we analyze Algorithm~\ref{alg:general}, a generic unlearning framework assuming access to an approximate empirical risk minimization oracle.
We show that any instance of this framework can minimize the in-distribution empirical utility objective to an arbitrary precision, subject to standard assumptions. 

\textcolor{black}{
\textbf{Notation.}
For any dataset $\mathcal{S}$, we denote by $\ttheta^\star_{\mathcal{S}}$ the global minimizer of the empirical loss $\mathcal{L}(\cdot~; \mathcal{S})$, which we assume to be unique.
In particular, for any forget set $\mathcal{S}_f \subset \mathcal{S}$, we denote by $\ttheta^\star_{\mathcal{S} \setminus \mathcal{S}_f }$ the unique global minimizer of the empirical loss $\mathcal{L}(\cdot~; \mathcal{S} \setminus \mathcal{S}_f )$.
Consider an arbitrary optimization procedure $\mathcal{A}$ (for training or unlearning) which outputs $\ttheta^{\mathcal{A}}_{\mathcal{S}} \in \mathbb{R}^d$ when given dataset $\mathcal{S}$ and initial model $\ttheta_0 \in \mathbb{R}^d$.
For every $\alpha_{\mathrm{precision}}, \Delta_{\mathrm{initial}}>0$, we denote by $T_{\mathcal{A}}(\alpha_{\mathrm{precision}}, \Delta_{\mathrm{initial}})$ the computational complexity required by $\mathcal{A}$ to guarantee 
\emph{on any dataset $\mathcal{S}$} that, given the initialization error $\|\ttheta_0 -\ttheta^\star_{\mathcal{S}} \|^2 \leq \Delta_{\mathrm{initial}}$, its output $\ttheta^{\mathcal{A}}_{\mathcal{S}}$ satisfies
$\| \ttheta^{\mathcal{A}}_{\mathcal{S}} - \ttheta^\star_{\mathcal{S}} \|^2 \leq \alpha_{\mathrm{precision}}.$
}

\begin{algorithm}[t!]
\caption{Unlearning via Noisy Minimizer Approximation}
\label{alg:general}
\textbf{Input:} 
\textcolor{black}{Target empirical loss} $\textcolor{black}{\alpha_\mathrm{emp}}$, smoothness constant $L$, model dimension $d$, unlearning budget $\varepsilon$.

\emph{Training:} get $\ttheta_\cS^\cA$ by approximating the risk minimizer on $\cS$ up to squared distance $\tfrac{\textcolor{black}{\alpha_\mathrm{emp}} \varepsilon}{4Ld}$ \;\\ 
\emph{Unlearning:} get $\ttheta^\cU$ by approximating the risk minimizer on $\cS \setminus \forget$ up to squared distance $\tfrac{\textcolor{black}{\alpha_\mathrm{emp}} \varepsilon}{4Ld}$ after initializing at $\ttheta_\cS^\cA$\;

\textbf{return} $\ttheta^\cU + \cN(0, \frac{\textcolor{black}{\alpha_\mathrm{emp}}}{2Ld} \mI_d)$\;
\end{algorithm}

\begin{restatable}{theorem}{generallipschitz}
    Let $\varepsilon, \textcolor{black}{\alpha_\mathrm{emp}}, \Delta>0, 0 \leq f < n$, and $q>1$.
    Assume that, for every $\zz \in \cS$, the loss $\ell(\cdot~;\zz)$ is $L$-smooth, and $\varepsilon \leq d$.
    Assume that for every $\forget \subset \cS, \card{\forget} \leq f,$ the empirical loss over $\cS \setminus \forget$ has a unique minimizer.
    \textcolor{black}{
    Recall the notation above and consider the unlearning-training pair $(\mathcal{U}, \mathcal{A})$ in Algorithm~1, with the initialization error of $\mathcal{A}$ on set $\mathcal{S}$ being at most $\Delta$.
    }
    
    Then, $(\cU, \cA)$ satisfies \emph{$(q, q\varepsilon)$-approximate unlearning} with empirical loss, over worst-case $\cS \setminus \forget$, at most $\textcolor{black}{\alpha_\mathrm{emp}}$ in expectation over the randomness of the algorithm, with time complexity:
    \begin{align*}
        \text{Training:}\,~ T_\cA{\left(\frac{\textcolor{black}{\alpha_\mathrm{emp}}\varepsilon}{4Ld}, \Delta\right)},\qquad
        \text{Unlearning:}\,~ T_\cU{\bigg(\frac{\textcolor{black}{\alpha_\mathrm{emp}}\varepsilon}{2Ld},~ \frac{\textcolor{black}{\alpha_\mathrm{emp}}\varepsilon}{4Ld} + 2 \max_{\substack{\forget \subset \cS \\ \card{\forget} \leq f}} \norm{\ttheta_{\cS}^\star - \ttheta_{\cS \setminus \forget}^\star}^2\bigg)}.
    \end{align*}

    \label{th:general-lipschitz}
\end{restatable}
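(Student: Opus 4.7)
The plan is to bound the Rényi divergence between the two arms of Definition~\ref{def:removal}---training-then-unlearning versus training directly on the retain set---by showing that both produce isotropic Gaussians with a common covariance whose means sit close to the unique empirical risk minimizer $\ttheta^\star_{\cS \setminus \forget}$. First I would unpack the two output distributions. In the ``forget'' arm, the algorithm returns $\ttheta^\cU + \bxi$ with $\bxi \sim \cN(\0, \tfrac{\alpha_{\mathrm{emp}}}{2Ld}\mI_d)$, where $\ttheta^\cU$ approximates $\ttheta^\star_{\cS \setminus \forget}$ up to squared distance $\tfrac{\alpha_{\mathrm{emp}}\varepsilon}{4Ld}$ by the precision argument of $T_\cU$. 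In the ``no-forget'' arm, training on $\cS \setminus \forget$ already produces a point within the same squared distance of $\ttheta^\star_{\cS \setminus \forget}$, so the subsequent unlearning call with $\varnothing$ preserves this guarantee. A triangle inequality then bounds the distance between the two Gaussian centers by $\sqrt{\alpha_{\mathrm{emp}}\varepsilon/(Ld)}$, and the closed-form Rényi divergence of two isotropic Gaussians with common covariance $\sigma^2\mI_d$, namely $\tfrac{q}{2\sigma^2}\norm{\mu_1-\mu_2}^2$, is designed to yield exactly the target $q\varepsilon$.

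For the utility bound, I would use $L$-smoothness together with the fact that, since the loss has a unique minimum, $\nabla \loss(\ttheta^\star_{\cS \setminus \forget}; \cS \setminus \forget) = \0$. This gives the quadratic upper bound
\[
\loss(\ttheta^\cU + \bxi; \cS \setminus \forget) - \loss(\ttheta^\star_{\cS \setminus \forget}; \cS \setminus \forget) \leq \tfrac{L}{2}\norm{\ttheta^\cU + \bxi - \ttheta^\star_{\cS \setminus \forget}}^2.
\]
Taking expectation over $\bxi$ decouples the deterministic approximation term $\norm{\ttheta^\cU - \ttheta^\star_{\cS \setminus \forget}}^2 \leq \tfrac{\alpha_{\mathrm{emp}}\varepsilon}{4Ld}$ from the noise contribution $d\sigma^2 = \tfrac{\alpha_{\mathrm{emp}}}{2L}$; the assumption $\varepsilon \leq d$ controls the first term by $\tfrac{\alpha_{\mathrm{emp}}}{4L}$, so the right-hand side is of order $\alpha_{\mathrm{emp}}/L$, and multiplication by $L/2$ yields a quantity at most $\alpha_{\mathrm{emp}}$. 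Since the approximation guarantee holds uniformly in $\forget$, the bound passes through the worst-case maximum.

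For the time complexities, I would invoke the definitions of $T_\cA$ and $T_\cU$ directly. The training call costs $T_\cA(\tfrac{\alpha_{\mathrm{emp}}\varepsilon}{4Ld}, \Delta)$ by the stated initialization-error bound $\Delta$. For the unlearning call, the initializer $\ttheta^\cA_\cS$ is at squared distance at most $\tfrac{\alpha_{\mathrm{emp}}\varepsilon}{4Ld}$ from $\ttheta^\star_\cS$, while the target $\ttheta^\star_{\cS \setminus \forget}$ may differ from $\ttheta^\star_\cS$ by $\norm{\ttheta^\star_\cS - \ttheta^\star_{\cS \setminus \forget}}$; combining the two via the triangle inequality and taking a worst-case over $\forget$ produces the stated initialization-error argument of $T_\cU$.

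The main obstacle I anticipate is keeping the constants aligned: the budget has to be split just so between training/unlearning precision $\tfrac{\alpha_{\mathrm{emp}}\varepsilon}{4Ld}$ and noise variance $\tfrac{\alpha_{\mathrm{emp}}}{2Ld}$ in order to land on exactly $q\varepsilon$ in the Rényi divergence and at most $\alpha_{\mathrm{emp}}$ in the utility. A secondary subtlety is the sense in which the approximation guarantees of $T_\cA$ and $T_\cU$ hold (deterministic versus in-expectation): comparing output distributions as in Definition~\ref{def:removal} requires pointwise precision bounds, or otherwise invocation of the post-processing property of Rényi divergence to absorb any internal randomness of the inner procedures.
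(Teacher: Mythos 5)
Your proposal is correct and follows essentially the same route as the paper's proof: use the optimizer precision guarantees to place both Gaussian centers within squared distance $\tfrac{\alpha_\mathrm{emp}\varepsilon}{4Ld}$ of the unique minimizer $\ttheta^\star_{\cS\setminus\forget}$, combine via the triangle (Young's) inequality to get $\norm{\ttheta^\cU - \ttheta^\cA_{\cS\setminus\forget}}^2 \leq \tfrac{\alpha_\mathrm{emp}\varepsilon}{Ld}$, apply the closed-form Rényi divergence of equal-covariance Gaussians to land on $q\varepsilon$, bound the utility via $L$-smoothness plus the decoupled noise second moment under $\varepsilon \leq d$, and obtain the unlearning initialization error by splitting $\norm{\ttheta^\cA_\cS - \ttheta^\star_{\cS\setminus\forget}}^2$ through $\ttheta^\star_\cS$. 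Your closing remark on the pointwise (rather than in-expectation) sense of the precision guarantees is also consistent with the paper, whose definition of $T_\cA$ and $T_\cU$ is a worst-case guarantee over datasets.
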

\vspace{-4mm}
Theorem~\ref{th:general-lipschitz} covers most optimization methods that have been studied in certified unlearning, such as gradient descent~\citep{neel2021descent,chourasia2023forget} and the Newton method variants~\citep{guo2020certified,sekhari2021remember}, and also covers unexplored methods with known gradient complexity bounds, e.g., those using stochastic gradients, projection, or acceleration~\citep{bubeck2015convex}.
The proof crucially leverages the existence of a unique minimizer as an anchor point to guarantee statistical indistinguishability. In fact, the optimization oracle needs to reach the aforementioned minimizer up to precision proportional to the unlearning budget $\varepsilon$, to compensate for the output perturbation in Algorithm~\ref{alg:general}.
Our approach generalizes previous analyzes~\citep{neel2021descent}, especially since it does not require convexity; it applies to several non-convex problems with a unique global minimizer, such as principal component analysis and matrix completion~\citep{zhu2018global}.

Thanks to Theorem~\ref{th:general-lipschitz} and Proposition~\ref{prop:population}, we get Corollary~\ref{cor:noisygd} using gradient descent as an approximate risk minimizer, which has worst-case time complexity $\cO(nd\log(\tfrac{\Delta}{\textcolor{black}{\alpha_\mathrm{emp}}}))$ for strongly convex problems, for precision $\textcolor{black}{\alpha_\mathrm{emp}}$ and initialization error $\Delta$, with space complexity $\cO(d)$~\citep{nesterov2018lectures}.
\begin{restatable}{corollary}{noisygd}
    \label{cor:noisygd}
    Let $\varepsilon, \alpha, \textcolor{black}{\alpha_\mathrm{emp}}>0$, and $q>1$.
    Assume that, for every $\zz \in \cZ$, the loss $\ell(\cdot~;\zz)$ is $\mu$-strongly convex and $L$-smooth, and that $\varepsilon \leq d$.
    Consider the unlearning-training pair $(\cU, \cA)$ in Algorithm~\ref{alg:general}, where the approximate minimizers are obtained via \emph{gradient descent}\footnote{\textcolor{black}{i.e., the sequence  $\ttheta_{t+1} = \ttheta_t - \tfrac{2}{L+\mu}\nabla \loss(\ttheta_t; \cS), t \geq 0$, and we replace $\cS$ with $\cS \setminus \forget$ for unlearning. We explain how to compute the number of iterations in Remark~\ref{rk:practical}.}}, and denote $\ttheta_0 \in \R^d$ the initial model for training.
    
   Then, $(\cU, \cA)$ satisfies \emph{$(q, q\varepsilon)$-approximate unlearning} with empirical loss, over the worst-case $\cS \setminus \forget$, at most $\textcolor{black}{\alpha_\mathrm{emp}}$ in expectation over the randomness of the algorithm with time complexity:
     \begin{align*}
        \text{Training:}~ \cO{\left(nd \log{\left( \frac{d}{\textcolor{black}{\alpha_\mathrm{emp}} \varepsilon} \norm{\ttheta_0 - \ttheta_\cS^\star}^2\right)}\right)},\,
        \text{Unlearning:}~ \cO{\bigg(nd \log{\bigg(1 + \frac{d}{\textcolor{black}{\alpha_\mathrm{emp}} \varepsilon} \max_{\substack{\forget \subset \cS \\ \card{\forget} \leq f}} \norm{\ttheta_{\cS}^\star - \ttheta_{\cS \setminus \forget}^\star}^2\bigg)}\bigg)},
    \end{align*}
    ignoring dependencies on $L, \mu$.
    Also, the space complexity is $\cO(d)$ during training and unlearning.
    
    \textcolor{black}{For $\alpha_{\mathrm{emp}} \leq \alpha$,} assuming that for every $\zz \in \cZ$ the loss $\ell(\cdot~;\zz)$ is $R$-Lipschitz, the in-distribution population risk $\loss_{\mathrm{ID}}(\cU, \cA)$ is at most $\alpha$, if $n = \Omega{(\tfrac{1}{\alpha})}$ and $f=\cO(n \sqrt{\alpha})$,
    with time complexity:
    \begin{align*}
        \text{Training:}\,~ \cO{\left(nd \log{\left( \frac{d}{\alpha \varepsilon} \E_{\cS}\norm{\ttheta_0 - \ttheta_\cS^\star}^2\right)}\right)},
        \text{Unlearning:}\,~ 
        \cO{\bigg(nd \log{\bigg(1 + \frac{d}{\alpha \varepsilon} \left(\frac{Rf}{n}\right)^2\bigg)}\bigg)}.
    \end{align*}
\end{restatable}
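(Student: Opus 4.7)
My strategy is to instantiate Theorem~\ref{th:general-lipschitz} with gradient descent as the approximate minimizer oracle (for both training and unlearning), and then convert the empirical bound into a population bound using the Lipschitz branch of Proposition~\ref{prop:population}. Since $\ell(\cdot~;\zz)$ is $\mu$-strongly convex and $L$-smooth for every $\zz$, the empirical loss inherits both constants. The textbook contraction $\norm{\ttheta_t - \ttheta^\star_\cS}^2 \leq \bigl(\tfrac{L-\mu}{L+\mu}\bigr)^{2t}\norm{\ttheta_0 - \ttheta^\star_\cS}^2$ of the stated iteration, combined with the $\cO(nd)$ cost and $\cO(d)$ memory of each step, gives the oracle guarantee $T_\cA(\alpha_{\mathrm{precision}}, \Delta_{\mathrm{initial}}) = T_\cU(\alpha_{\mathrm{precision}}, \Delta_{\mathrm{initial}}) = \cO(nd \log(\Delta_{\mathrm{initial}}/\alpha_{\mathrm{precision}}))$ once the condition number is absorbed into the constant.

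Substituting this into the formulas of Theorem~\ref{th:general-lipschitz} produces the training complexity immediately. For unlearning, the quotient $(\tfrac{\alpha_{\mathrm{emp}}\varepsilon}{4Ld} + 2\max_\forget\norm{\ttheta^\star_\cS - \ttheta^\star_{\cS\setminus\forget}}^2)/\tfrac{\alpha_{\mathrm{emp}}\varepsilon}{2Ld}$ simplifies to $\tfrac{1}{2} + \tfrac{4Ld}{\alpha_{\mathrm{emp}}\varepsilon}\max_\forget\norm{\ttheta^\star_\cS - \ttheta^\star_{\cS\setminus\forget}}^2$, giving the stated $\log(1 + \ldots)$ form. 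The $(q, q\varepsilon)$-unlearning certification, the conditional worst-case empirical bound $\alpha_{\mathrm{emp}}$, and the $\cO(d)$ space bound transfer directly from Theorem~\ref{th:general-lipschitz} and from gradient descent's memory footprint.

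To obtain the population claim, I would take an outer expectation over $\cS \sim \cD^n$ in the empirical bound from Theorem~\ref{th:general-lipschitz} and plug it into the Lipschitz inequality of Proposition~\ref{prop:population}, yielding $\loss_{\mathrm{ID}}(\cU, \cA) = \cO(\alpha_{\mathrm{emp}} + 1/n + (f/n)^2)$; choosing $\alpha_{\mathrm{emp}} \leq \alpha$, $n = \Omega(1/\alpha)$, and $f = \cO(n\sqrt{\alpha})$ drives this to $\cO(\alpha)$. To remove the $\max_\forget$ from the unlearning log, I would bound the minimizer shift via first-order optimality of $\ttheta^\star_{\cS\setminus\forget}$ together with $\mu$-strong convexity of $\loss(\cdot~;\cS)$:
\[ \mu\norm{\ttheta^\star_\cS - \ttheta^\star_{\cS\setminus\forget}} \leq \norm{\nabla\loss(\ttheta^\star_{\cS\setminus\forget}; \cS) - \nabla\loss(\ttheta^\star_{\cS\setminus\forget}; \cS\setminus\forget)} \leq \frac{2Rf}{n}, \]
where the final inequality decomposes the two empirical averages into the $\forget$ contribution and the reweighting of the $\cS\setminus\forget$ contribution, bounding each via $\norm{\nabla\ell(\cdot;\zz)} \leq R$. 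Substituting $\norm{\ttheta^\star_\cS - \ttheta^\star_{\cS\setminus\forget}}^2 = \cO((Rf/n)^2)$ into the unlearning complexity yields the stated expression. The argument is essentially a chain of substitutions into the two pillar results; the only care point, and not a real obstacle, is commuting the outer $\cS$-expectation with the worst-case maximum over $\forget \subset \cS$, so that the empirical quantity matches exactly what Proposition~\ref{prop:population} takes as input.
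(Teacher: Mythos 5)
Your proposal is correct and follows essentially the same route as the paper's proof: instantiate Theorem~\ref{th:general-lipschitz} with gradient descent's $\cO(nd\log(\Delta/\alpha_{\mathrm{precision}}))$ oracle complexity, take the outer expectation over $\cS$, and feed the resulting empirical bound into the Lipschitz branch of Proposition~\ref{prop:population}. The only cosmetic difference is that you derive the minimizer-shift bound $\norm{\ttheta^\star_\cS - \ttheta^\star_{\cS\setminus\forget}} \leq 2Rf/(\mu n)$ from first-order optimality and strong convexity, whereas the paper cites it as Lemma~6 of \cite{sekhari2021remember}; your derivation is exactly the standard proof of that lemma, so nothing substantive changes.
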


\begin{table}[t!]
\centering
\renewcommand{\arraystretch}{1.25}
\resizebox{\textwidth}{!}{%
\begin{tabular}{l c c} 
\toprule
{Algorithm} & \multicolumn{2}{c}{In-Distribution Deletion Capacity}  \\
\cmidrule{2-3}
     & Utility & Computational \\
    \midrule
    Algorithm~\ref{alg:general} with Gradient Descent  & $\Omega(n \sqrt{\alpha})$ & $\Omega\left(n \tfrac{\alpha \varepsilon \exp(T/2nd)}{R d \|\ttheta_0 - \ttheta^\star_{\mathcal{S}}\|}\right)$  \\ 
    \midrule
    Newton step \citep{sekhari2021remember}  & $\widetilde{\Omega}\left(n \min{\left\{\alpha, \frac{\sqrt{\alpha}\varepsilon^{1/4}}{d^{1/4}}\right\}}\right)$ & $(n-1)\1_{T = \Omega{(nd^2 + d^{2.38})}}$        \\ 
    \midrule
    Differential Privacy \citep{huang2023tight}  & $\widetilde{\Theta}\left(n \sqrt{\tfrac{\alpha \varepsilon}{d}}\right)$ & $(n-1)\1_{T = \Omega{(n^2d)}}$     \\ 
    \midrule
    Lower bound \citep{lai2016agnostic} & $\mathcal{O}(n \sqrt{\alpha})$ & ---  \\ 
\bottomrule
\end{tabular}%
}
\caption{Summary of the in-distribution deletion capacities (the larger, the better), for error bound $\alpha>0$ and computation budget $T>0$, under approximate unlearning for strongly convex tasks, with smoothness and Lipschitz assumptions. We adapt the unlearning guarantees of prior works to align with our Rényi divergence-based definition.
The last two reported computational capacities mean that no sample can be unlearned unless $T$ exceeds the proven time complexity of these algorithms.}
\label{tab:id}
\end{table}

From the result above, we deduce that Algorithm~\ref{alg:general}, when using gradient descent, achieves an in-distribution utility deletion capacity of at least $\Omega(n \sqrt{\alpha})$. This implies that a constant fraction of the dataset can be deleted while maintaining a fixed error $\alpha$. Our analysis establishes a \emph{tight separation} from `lazy' differential privacy methods, where deletion capacity degrades polynomially with the model dimension $d$~\citep{huang2023tight}. Furthermore, this result answers a previously open theoretical question by~\cite{sekhari2021remember}, demonstrating that dimension-independent utility deletion capacity is indeed possible. Notably, the best previously known utility deletion capacity decayed with dimension as $\Omega(1/d^{1/4})$\citep{sekhari2021remember}.
We experimentally validate this separation in Figure~\ref{fig:separation} on a simple least-squares regression task.
Finally, we observe that the deletion capacity of $\Omega(n\sqrt{\alpha})$, as established in Corollary~\ref{cor:noisygd}, is \emph{tight} in terms of its dependence on both $\alpha$ and $n$, following lower bounds for robust mean estimation~\citep{lai2016agnostic,diakonikolas2019robust}.

Meanwhile, the time complexity bound from Corollary~\ref{cor:noisygd} increases logarithmically with the fraction of unlearned samples. In fact, for a time budget $T$, the computational deletion capacity given in Corollary~\ref{cor:noisygd} is $\Omega(n \tfrac{\alpha \varepsilon \exp(T/2nd)}{R d \norm{\ttheta_0 - \ttheta^\star_\cS}})$ in the Lipschitz case. This capacity scales linearly with $n$ and exponentially with the time budget $T$, effectively counterbalancing the linear dependence on the unlearning budget $\varepsilon$, the error $\alpha$, and the inverse of the dimension $d$. The exponential dependence on $T$ is highly favorable, though it stems from the logarithmic gradient complexity of gradient descent in strongly convex tasks which degrades to quadratic for non-strongly convex tasks~\citep{nesterov2018lectures}. In contrast, the algorithm by~\cite{sekhari2021remember}, which achieved the previously best known utility deletion capacity, has a time complexity of $\cO(nd^2 + d^{2.38})$ and space complexity of $\cO(d^2)$. Consequently, its computational deletion capacity is zero unless the time budget $T$ is at least $\Omega(nd^2 + d^{2.38})$. This comparison shows that gradient descent with output perturbation possesses the largest known in-distribution deletion capacities. A summary comparison of various unlearning-training approaches, in terms of in-distribution deletion capacity, is provided in Table~\ref{tab:id}.
We defer the full proofs related to this section to Appendix~\ref{app:th1}.

\textcolor{black}{Thanks to our analysis, we also establish that the certified unlearning algorithms of~\cite{neel2021descent} and~\cite{chourasia2023forget}, whose generalization bounds were unknown prior to our work, also achieve a tight in-distribution utility deletion capacity, and a similar computational deletion capacity as Algorithm~\ref{alg:general} with gradient descent.
We recall that there are a few algorithmic differences with the latter, since \cite{neel2021descent} additionally project models and \cite{chourasia2023forget} add noise at each iteration and assume a Gaussian model initialization.}

\begin{figure}[t!]
    \centering
    \begin{subfigure}{0.48\textwidth}
        \centering
        \includegraphics[width=\textwidth]{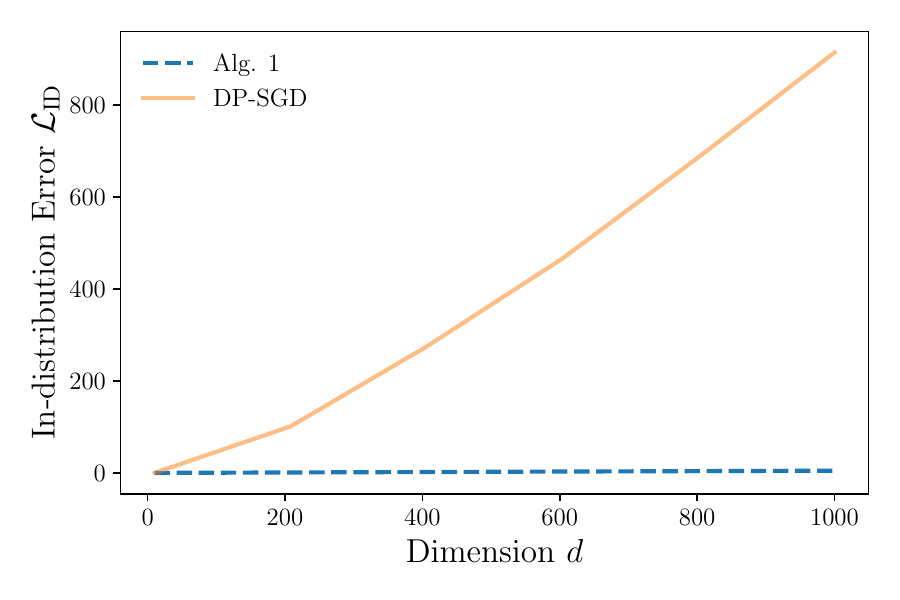}  %
        \caption{In-distribution error $\loss_{\mathrm{ID}}$ versus dimension $d$ for Algorithm~\ref{alg:general} and DP-SGD, with $f = 20$ forget data out of $10,000$ samples. The error of DP-SGD is near-linear, showing a separation between differential privacy and unlearning.}
        \label{fig:separation}
    \end{subfigure}
    \hfill  %
    \begin{subfigure}{0.48\textwidth}
        \centering
        \includegraphics[width=\textwidth]{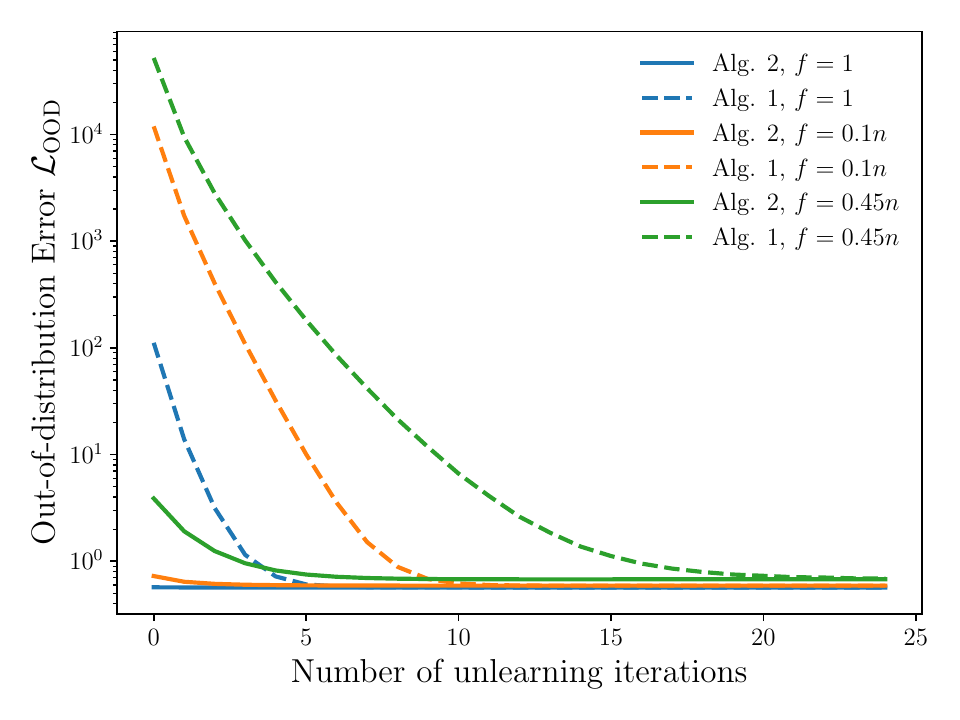}  %
        \caption{Out-of-distribution error $\loss_{\mathrm{OOD}}$ versus number of \emph{unlearning} iterations for Algorithms~\ref{alg:general} and~\ref{alg:tgd}, using gradient descent, with $f \in \{1, 0.1 n, 0.45 n\}$ forget data out of $1,000$ samples.
        The per-iteration cost is the same for both algorithms.
        The unlearning time of Alg.~\ref{alg:general} (non-robust) can be $10\times$ slower than Alg.~\ref{alg:tgd}.}
        \label{fig:ood}
    \end{subfigure}
    \caption{
    Numerical validation on a linear regression task with synthetic data for the same  unlearning budget, with in-distribution \emph{(left)} and out-of-distribution \emph{(right)} data.
    The in-distribution forget set is sampled at random, while the out-of-distribution data is obtained by shifting labels with a fixed offset. Additional details \textcolor{black}{and results on real data} can be found in Appendix~\ref{app:details}.}
    \label{fig:main}
\end{figure}

\section{Out-of-distribution Unlearning via Robust Training}

While Corollary~\ref{cor:noisygd} offers significant improvements over existing results, extending the same analysis to the out-of-distribution utility objective poses new challenges. In this case (Definition~\ref{def:objective}), the forget data can deviate arbitrarily from the test distribution. Unfortunately, the time complexity of the unlearning procedure in Algorithm~\ref{alg:general} grows with the distance between the risk minimizers on the retain and full data, which becomes unbounded for the out-of-distribution objective, defeating the purpose of approximate unlearning in the worst case.

This is formalized in Proposition~\ref{prop:negative} below, where a \emph{single} forget sample can make the initialization error of the unlearning phase of Algorithm~\ref{alg:general} arbitrarily large.
This naturally implies that the unlearning phase can be slower than retraining from an arbitrary initialization in the worst case, following standard gradient complexity lower bounds~\citep[Theorem~2.1.13]{nesterov2018lectures}.

\begin{restatable}{proposition}{finetune}
\label{prop:negative}
Let $f=1, n>1$, and $\cZ = \R^d$.
There exists a $1$-strongly convex and $1$-smooth loss function, such that for any retain set $\retain \in \cZ^{n-1}$, any (unlearning-time) initialization error $\Delta > 0$, there exists a forget sample $\zz_f \in \cZ$ achieving 
$\norm{\ttheta_{\retain \cup \{\zz_f\}}^\star - \ttheta_{\retain}^\star}^2 = \Delta$, where $\ttheta_{\retain}^\star$ and $\ttheta_{\retain \cup \{\zz_f\}}^\star$ denote the empirical minimizers on the retain and full data respectively.
\end{restatable}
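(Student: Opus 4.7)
The plan is to exhibit the simplest possible instance where the empirical minimizer is a linear function of the data: namely the quadratic loss $\ell(\ttheta;\zz) = \tfrac{1}{2}\norm{\ttheta-\zz}^2$ on $\cZ = \R^d$. First I would check the regularity properties: the Hessian is $\mI_d$ for every $\zz$, so $\ell(\cdot~;\zz)$ is $1$-strongly convex and $1$-smooth, meeting the assumptions.

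Next, I would compute closed-form minimizers. For any finite set $\cS \subset \R^d$, a direct first-order optimality computation gives $\ttheta_{\cS}^\star = \tfrac{1}{\card{\cS}}\sum_{\zz \in \cS} \zz$. Applying this to the retain set of size $n-1$ yields $\ttheta_{\retain}^\star = \bar{\zz}_\retain := \tfrac{1}{n-1}\sum_{\zz \in \retain}\zz$, and to the full set $\retain \cup \{\zz_f\}$ yields
\begin{align*}
\ttheta_{\retain \cup \{\zz_f\}}^\star = \frac{n-1}{n}\,\bar{\zz}_\retain + \frac{1}{n}\,\zz_f.
\end{align*}
Subtracting gives the key identity
\begin{align*}
\ttheta_{\retain \cup \{\zz_f\}}^\star - \ttheta_{\retain}^\star = \tfrac{1}{n}\bigl(\zz_f - \bar{\zz}_\retain\bigr),
\end{align*}
so the squared distance equals $\tfrac{1}{n^2}\norm{\zz_f - \bar{\zz}_\retain}^2$.

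Finally, given any target $\Delta>0$, I would choose $\zz_f := \bar{\zz}_\retain + n\sqrt{\Delta}\,\ee_1$, where $\ee_1$ is any unit vector in $\R^d$; this forget sample lies in $\cZ = \R^d$ and makes $\norm{\ttheta_{\retain \cup \{\zz_f\}}^\star - \ttheta_{\retain}^\star}^2 = \Delta$ exactly, as required. There is no real obstacle here; the only subtlety is checking that the adversary is allowed to pick $\zz_f$ depending on $\retain$ (which is the case in the out-of-distribution objective from Definition~\ref{def:objective}), and that the choice of loss is legitimate under the stated assumptions. The implication for the running time of the unlearning phase of Algorithm~\ref{alg:general} then follows from the cited lower bound of~\cite{nesterov2018lectures}, since the unlearning phase must reduce an initial error that can be made arbitrarily large by a single adversarial forget point.
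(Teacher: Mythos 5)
Your proposal is correct and follows essentially the same route as the paper's proof: the same quadratic loss $\ell(\ttheta;\zz)=\tfrac{1}{2}\norm{\ttheta-\zz}^2$, the same closed-form minimizers as empirical means, and the same choice $\zz_f = \bar{\zz}_{\retain} + n\sqrt{\Delta}\,\uu$ for a unit vector $\uu$. Nothing to correct.
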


To address this, we introduce a new strategy where the goal is to train on the full dataset in a manner that minimizes sensitivity to the forget data. Since the forget data is unknown in advance and could potentially consist of outliers, we employ a robust variant of gradient descent. Specifically, we use the coordinate-wise trimmed mean of the gradient batch, as described in Algorithm~\ref{alg:tgd}. The trimmed mean, with trimming parameter $\tau$, is a classical robust statistics method that computes the average of all inputs along each coordinate, excluding the $\tau$ smallest and largest values~\citep{lugosi2019mean}.
This approach allows mitigating the influence of outliers in the forget data, thereby enhancing the efficiency and robustness of the unlearning phase, especially in out-of-distribution settings.

In this section, we denote the retain set as $\retain$ for clarity since, in the out-of-distribution scenario, it is sampled from the test distribution and independent of the forget data.
Recall also that $\ttheta_{\star, \retain}$ is the minimizer of the empirical loss over the retain data.
In order to analyze Algorithm~\ref{alg:tgd}, we introduce the \emph{interpolation error} constant on the retain set $\retain$, and its counterpart on the data distribution $\cD$ given $k \geq 1$ samples, respectively: 
\begin{align}
    \cE(\retain) \coloneqq \frac{1}{\card{\retain}} \sum_{\zz \in \retain} \norm{\nabla{\ell{(\ttheta^{\star}_{\retain}; \zz)}}}^2,\quad
    \cE_k(\cD) \coloneqq \E_{\retain \sim \cD^k} \cE(\retain).
\end{align}
The smaller the interpolation error, the easier it is to fit the retain data, and the underlying data distribution, respectively.
Theorem~\ref{th2} below states the unlearning and utility guarantees of Algorithm~\ref{alg:tgd}.

\begin{restatable}{theorem}{nolipschitz}
Let $\varepsilon, \alpha, \textcolor{black}{\alpha_\mathrm{emp}}, \mu, L>0, q>1$, $\ttheta_0 \in \R^d$, and $f \leq n \min{\left\{\tfrac{1}{3}, \tfrac{12\mu}{5(L-\mu)}\right\}}$.
Assume that, for every $\zz \in \cZ$, the loss $\ell(\cdot~;\zz)$ is $\mu$-strongly convex and $L$-smooth.
    Consider the unlearning-training pair $(\cU, \cA)$ in Algorithm~\ref{alg:general-robust} using gradient descent during unlearning.
    
   Then, $(\cU, \cA)$ satisfies \emph{$(q, q\varepsilon)$-approximate unlearning} with empirical loss, over $\retain$ with worst-case $\forget$, at most $\textcolor{black}{\alpha_\mathrm{emp}}$ in expectation over the randomness of the algorithm with time complexity:
        \begin{align*}
        \text{Training:}\,~ \cO{\left(nd\log{\left(\frac{d }{\textcolor{black}{\alpha_\mathrm{emp}}\varepsilon}\norm{\ttheta_0 - \ttheta^\star_{\retain}}^2\right)}\right)},\,
        \text{Unlearning:}\,~ \cO{\left(nd\log{\left(1+\frac{d}{\textcolor{black}{\alpha_\mathrm{emp}} \varepsilon}\frac{f}{n} \cE(\retain)\right)}\right)},
    \end{align*}
    ignoring dependencies on $L, \mu$.
The space complexity is $\cO(d)$ during training and unlearning.
        \textcolor{black}{For $\alpha_{\mathrm{emp}} \leq \alpha$,} 
        the out-of-distribution risk $\loss_{\mathrm{OOD}}(\cU, \cA)$ is at most $\alpha$, if $n-f = \Omega{(\tfrac{1}{\alpha})}$, with time complexity:
    \begin{align*}
        \text{Training:}\,~ \cO{\left(nd\log{\left(\frac{d }{\alpha\varepsilon}\E_{\retain}\norm{\ttheta_0 - \ttheta^\star_{\retain}}^2\right)}\right)},
        \text{Unlearning:}\,~ \cO{\left(nd\log{\left(1+\frac{d}{\alpha \varepsilon}\frac{f}{n} \cE_{n-f}(\cD)\right)}\right)}.
    \end{align*}

    \label{th2}
\end{restatable}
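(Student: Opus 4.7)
The overall strategy mirrors the reduction behind Theorem~\ref{th:general-lipschitz}: after training and unlearning, the model is randomized with Gaussian noise of variance $\alpha_{\mathrm{emp}}/(2Ld)$, so the Rényi divergence of order $q$ between $\cU(\forget, \cA(\retain \cup \forget))$ and $\cU(\varnothing, \cA(\retain))$ is bounded by $q \varepsilon$ provided the two pre-noise models sit within squared distance $\alpha_{\mathrm{emp}} \varepsilon / (Ld)$ of each other. Both pre-noise models are approximate minimizers of $\loss(\cdot~; \retain)$ up to precision $\alpha_{\mathrm{emp}}\varepsilon/(4Ld)$, so the triangle inequality and strong convexity yield the unlearning guarantee; the empirical loss bound then follows from $L$-smoothness applied to $\ttheta^\star_{\retain}$.

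The heart of the argument is bounding the initialization error at the start of unlearning, i.e., $\|\ttheta^\cA_{\retain \cup \forget} - \ttheta^\star_{\retain}\|^2$. Unlike Theorem~\ref{th:general-lipschitz}, where this is controlled via $\|\ttheta^\star_\cS - \ttheta^\star_{\cS \setminus \forget}\|^2$ (which is unbounded by Proposition~\ref{prop:negative}), the idea here is to show that the trimmed gradient descent output is intrinsically close to $\ttheta^\star_{\retain}$, regardless of the forget samples. I would first establish a one-step robustness bound: under $f \leq n/3$, for any parameter $\ttheta$, the coordinate-wise trimmed mean satisfies
\begin{align*}
\bigl\|\mathrm{TM}_\tau(\{\nabla \ell(\ttheta; \zz)\}_{\zz \in \retain \cup \forget}) - \nabla \loss(\ttheta; \retain)\bigr\|^2 \leq c \frac{f}{n} \cdot \frac{1}{|\retain|} \sum_{\zz \in \retain} \|\nabla \ell(\ttheta; \zz) - \nabla \loss(\ttheta; \retain)\|^2,
\end{align*}
for an absolute constant $c$, by working coordinate-wise and invoking standard breakdown-point analysis of the trimmed mean. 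This makes $\mathrm{TM}_\tau$ a \emph{biased} gradient oracle for $\loss(\cdot~; \retain)$, with bias at the optimum equal to $c(f/n)\cE(\retain)$.

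Next, I would run the standard strongly convex descent argument with this biased oracle. A careful use of $\mu$-strong convexity and $L$-smoothness, together with the trimmed-mean bound, yields a linear contraction of the form $\|\ttheta_{t+1} - \ttheta^\star_{\retain}\|^2 \leq \rho \|\ttheta_t - \ttheta^\star_{\retain}\|^2 + O\bigl(\frac{1}{L^2}\frac{f}{n}\cE(\retain)\bigr)$, where $\rho<1$ requires the contraction to dominate the bias amplification; this is where the condition $f \leq 12 \mu n/(5(L-\mu))$ enters. Iterating and choosing the number of training iterations $\Theta(\log(d\|\ttheta_0-\ttheta^\star_\retain\|^2/(\alpha_{\mathrm{emp}}\varepsilon)))$ gives $\|\ttheta^\cA_{\retain \cup \forget} - \ttheta^\star_{\retain}\|^2 = O(\alpha_{\mathrm{emp}}\varepsilon/(Ld) + (f/n)\cE(\retain))$. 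Plugging this initialization error into the standard gradient descent complexity on $\loss(\cdot~; \retain)$ yields the claimed unlearning time $O(nd \log(1 + (d/(\alpha_{\mathrm{emp}}\varepsilon))(f/n)\cE(\retain)))$. The space bound is immediate since both phases store only $\ttheta_t$ and can compute the trimmed mean in a streaming coordinate-wise fashion.

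The main obstacle will be the robust aggregation inequality above with the correct $(f/n)\,\cE(\retain)$ dependence and no extra dimension factors: a naive union bound across coordinates would introduce a spurious $d$, so I would sum the squared coordinate deviations directly and exploit that the coordinate-wise variance of retain gradients summed over coordinates is exactly $\cE(\retain)$. The population OOD risk bound is then a direct application of the first part of Proposition~\ref{prop:population} with $\alpha_{\mathrm{emp}} \leq \alpha$ and $n-f = \Omega(1/\alpha)$, where the $\cE(\retain)$ appearing in the training and unlearning time bounds is replaced by its expectation $\cE_{n-f}(\cD)$ since the stated utility guarantees are in expectation over $\retain \sim \cD^{n-f}$.
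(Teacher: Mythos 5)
Your proposal is correct and follows essentially the same route as the paper: a dimension-free robustness bound for the coordinate-wise trimmed mean relative to the retain-set average (the paper's Lemma~\ref{lem:tmean}), a biased-gradient-descent contraction showing the training output lands within $O(\tfrac{f}{\mu n}\cE(\retain))$ of $\ttheta^\star_{\retain}$ with the condition $f \leq \tfrac{12\mu n}{5(L-\mu)}$ controlling exactly the bias-amplification term you identify (the paper's Lemma~\ref{lem:tgd}, which runs the recursion on the optimality gap rather than on $\|\ttheta_t - \ttheta^\star_{\retain}\|^2$, an interchangeable choice under strong convexity and smoothness), followed by the output-perturbation/Rényi argument of Theorem~\ref{th:general-lipschitz} and the first bound of Proposition~\ref{prop:population}. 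The only point to make fully explicit is the intermediate inequality bounding the gradient variance at an arbitrary $\ttheta_t$ by the variance at $\ttheta^\star_{\retain}$ plus a multiple of $\|\nabla\loss(\ttheta_t;\retain)\|^2$, which is what turns your one-step bound into the claimed contraction.
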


\begin{algorithm}[t!]
\caption{Unlearning via Robust Training and Noisy Minimizer Approximation}
\label{alg:general-robust}
\textbf{Input:} 
\textcolor{black}{Target empirical loss} $\textcolor{black}{\alpha_\mathrm{emp}}$, smoothness $L$ and strong convexity constant $\mu$, model dimension $d$,
initial model $\ttheta_0$,
trimming parameter $f$, unlearning budget $\varepsilon$, initialization error $\Delta$.

\emph{Training:} get $\ttheta_\cS^{\cA_f}$ by robust training \textcolor{black}{(shown below)} for $K \geq \frac{2L}{\mu}\log(\frac{Ld \Delta}{\textcolor{black}{\alpha_\mathrm{emp}} \varepsilon})$ iterations: \;\\ 
\For{$t=0 \dots K-1$}{

Compute the trimmed mean gradient:
$\rr_t = \mathrm{TM}_f{\left(\nabla{\ell{(\ttheta_{t}; \zz_1)}}, \ldots, \nabla{\ell{(\ttheta_{t}; \zz_n)}}\right)}$\;\\
{\small\tcc{average all but $f$ largest and smallest inputs coordinate-wise}}

Update the model:
$\ttheta_{t+1} = \ttheta_{t} - \frac{1}{L} \rr_t$\;
}
\emph{Unlearning:} get $\ttheta^\cU$ by approximating the risk minimizer on $\cS \setminus \forget$ up to squared distance $\tfrac{\textcolor{black}{\alpha_\mathrm{emp}}\varepsilon}{4Ld}$ by initializing at $\ttheta_\cS^{\cA_f}$\;

\textbf{return} $\ttheta^\cU + \cN(0, \frac{\textcolor{black}{\alpha_\mathrm{emp}}}{2Ld} \mI_d)$\;
\label{alg:tgd}
\end{algorithm}

Theorem~\ref{th2} addresses the primary limitation of the analysis in Corollary~\ref{cor:noisygd}: the unlearning time complexity is now independent of the out-of-distribution forget data. Instead, the time complexity is driven by the interpolation error on the retain set, rather than the difference between the empirical risk minimizers of the retain and full datasets. The interpolation error is often much smaller for well-behaved data or sufficiently large models, making this bound much tighter. In contrast, Corollary~\ref{cor:noisygd} could only achieve such a strong bound under the restrictive assumption that the loss is Lipschitz, which either results in an excessively large Lipschitz constant, e.g., scaling with model dimension for bounded domains, or excludes fundamental tasks, such as unconstrained least-squares regression.
We numerically validate the robustness of the unlearning time complexity of Algorithm~\ref{alg:tgd}, compared to Algorithm~\ref{alg:general}, in Figure~\ref{fig:ood} on a least-squares regression task\textcolor{black}{, and defer additional validation on real data to Appendix~\ref{app:details}}.
Finally, this result is the first theoretical guarantee against the so-called slow-down attacks in machine unlearning~\citep{marchant2022hard}, which not only seek to undermine utility but also unlearning efficiency,  denial-of-service attacks.

The proof of Theorem~\ref{th2} demonstrates that the robust training procedure converges to the empirical risk minimizer on the retain data, up to a small error proportional to the interpolation error and the fraction of forget data. This provides a solid initialization for the unlearning process, with limited sensitivity to the forget data. Additionally, although Algorithm~\ref{alg:tgd} sets the trimming parameter $\tau$ equal to the size $f$ of the forget set for simplicity, the result of Theorem~\ref{th2} only requires $\tau = \cO(f)$, and can be straightforwardly extended to any trimming parameter by replacing $f$ by $\tau$ in the theorem.
We defer the full proofs related to this section to Appendix~\ref{app:th2}.

\begin{table}[t!]
\centering
\renewcommand{\arraystretch}{1.25}
\resizebox{\textwidth}{!}{%
\begin{tabular}{l c c c} 
\toprule
{Algorithm} & \multicolumn{3}{c}{Out-of-Distribution Deletion Capacity}  \\
\cmidrule{2-4}
     & Utility & \multicolumn{2}{c}{Computational} \\
     &  & Lipschitz & Non-Lipschitz  \\ 
    \midrule
    Algorithm~\ref{alg:general} with Gradient Descent & $n-1$ & $\Omega\left(n \tfrac{\alpha \varepsilon \exp(T/2nd)}{R d \|\ttheta_0 - \ttheta^\star_{\mathcal{S}}\|}\right)$ & 0  \\ 
    \midrule
    Algorithm~\ref{alg:tgd} with Gradient Descent & $\Omega(n)$ & $\Omega\left(n \frac{\alpha^2 \varepsilon^2 \exp(T/nd)}{\mathcal{E}(\retain)d^2 \|\ttheta_0 - \ttheta_{\retain}^\star\|^2}\right)$ & $\Omega\left(n \frac{\alpha^2 \varepsilon^2 \exp(T/nd)}{\mathcal{E}(\retain)d^2 \|\ttheta_0 - \ttheta_{\retain}^\star\|^2}\right)$ \\ 
    \bottomrule
\end{tabular}%
}
\caption{Summary of the out-of-distribution utility and computational deletion capacities (the larger, the better) due to Theorem~\ref{th2}, for error bound $\alpha>0$ and computation budget $T>0$, under approximate unlearning for strongly convex tasks, with smoothness and Lipschitz assumptions. The out-of-distribution deletion capacities of previous certified unlearning methods are not known.}
\label{tab:ood}
\end{table}

The most significant aspect of deletion capacity here is computational. Since we can achieve arbitrarily small empirical risk, Proposition~\ref{prop:population} implies that the out-of-distribution utility deletion capacity of Algorithm~\ref{alg:tgd} is only constrained by the fact that the trimming parameter can be at most half of the full data size, and is thus a constant fraction $\Omega(n)$ of the dataset. On the other hand, manipulating the time complexity bound from Theorem~\ref{th2} gives a computational deletion capacity of $\Omega\left(n \tfrac{\alpha^2 \varepsilon^2 \exp(T/nd)}{\cE(\retain)d^2 \norm{\ttheta_0 - \ttheta_{\retain}^\star}^2}\right)$. This bound is favorable due to its exponential dependence on the time budget $T$, linear dependence on $n$, and the typically small interpolation error, which effectively mitigates the quadratic dependence on other parameters.
In contrast, the unlearning time complexity of any unlearning-training pair covered by Theorem~\ref{th:general-lipschitz} maybe unbounded, as explained earlier in the section, and thus the corresponding deletion capacity is zero.
Still, with the restrictive assumption that the initialization error is bounded on the full dataset and that the loss is $R$-Lipschitz everywhere, the computational deletion capacity from Corollary~\ref{cor:noisygd} is $\Omega\left(n \tfrac{\alpha \varepsilon \exp(T/2nd)}{R d \norm{\ttheta_0 - \ttheta^\star_\cS}}\right)$, which may be hindered by a large Lipschitz constant $R$ and does not benefit from a small interpolation error.
A summary comparison of our unlearning-training approaches, in terms of out-of-distribution deletion capacity, is provided in Table~\ref{tab:ood}.

\section{Conclusion}
This paper presents a theoretical analysis of the utility and complexity trade-offs in approximate machine unlearning. By focusing on both in-distribution and out-of-distribution unlearning scenarios, we offer new insights into how much data can be unlearnt under fixed computational budgets while maintaining utility. For the in-distribution case, we showed that a simple optimization procedure with output perturbation can unlearn a constant fraction of the dataset, independent of model dimension, thereby resolving a key theoretical question and highlighting the clear distinction from differential privacy-based unlearning approaches. For the more challenging out-of-distribution case, we introduced a robust gradient descent variant, ensuring a good initialization for unlearning and certifiably unlearning a constant fraction of the data with near-linear time and space complexity.

An intriguing open research direction is the analysis of unified upper bounds on the deletion capacities, specifically what is the maximum number of samples that can be deleted for a fixed computation, utility, and unlearning budgets? So far, only an upper bound on the utility deletion capacity is known.
Other open research directions include extending our results to more complex models, and improving utility and time complexity guarantees in real-world applications.

\section*{Acknowledgments}
YA acknowledges support by SNSF doctoral mobility and 200021\_200477 grants. 
SK acknowledges support by NSF 2046795 and 2205329, IES R305C240046, the MacArthur Foundation, Stanford HAI, OpenAI, and Google.
YA thanks Anastasia Koloskova for feedback on the manuscript, and Berivan Isik, Ken Liu, Mehryar Mohri, and members of the Stanford STAIR lab for earlier discussions. The authors are thankful to the anonymous reviewers for their constructive comments.
Thanks to Batiste Le Bars, as well as Andrew Lowy, Matthew Regehr and Gautam Kamath, for pointing out a mistake in the proof of Proposition~\ref{prop:population}.

\bibliography{references}
\bibliographystyle{apalike}
\clearpage

\onecolumn
\appendix
\section*{Appendix Organization}
The appendix is organized as follows.
Appendix~\ref{app:definitions} recalls standard definitions used in the main paper.
Appendix~\ref{app:prop1} contains the proof of Proposition~\ref{prop:population}.
Appendix~\ref{app:th1} contains the proofs of Theorem~\ref{th:general-lipschitz} and Corollary~\ref{cor:noisygd}.
Appendix~\ref{app:prop2} contains the proof of Proposition~\ref{prop:negative}.
Appendix~\ref{app:th2} contains the proof of Theorem~\ref{th2}.
Finally, Appendix~\ref{app:details} contains additional details on Tables~\ref{tab:id} and~\ref{tab:ood} and Figure~\ref{fig:main}.

\section{Standard Definitions}
\label{app:definitions}
We recall that we assume the loss function to be differentiable everywhere, throughout the paper.
\begin{definition}[$L$-smoothness]
\label{asp:smooth}
A function $\loss \colon \R^d \to \R$ is $L$-smooth if, for all $\ttheta, \ttheta' \in \R^d$, we have 
\begin{align*}
   \loss(\ttheta') - \loss(\ttheta)  - \lin{\nabla \loss(\ttheta),\ttheta'-\ttheta}
    \leq \frac{L}{2} \norm{\ttheta' - \ttheta}^2.
\end{align*}
\end{definition}
The above is equivalent to, for all $\ttheta, \ttheta' \in \R^d$, having 
$\norm{\nabla\loss(\ttheta') - \nabla\loss(\ttheta)}
    \leq L \norm{\ttheta' - \ttheta}$ (see, e.g., \citep{nesterov2018lectures}).

\begin{definition}[$\mu$-strong convexity]
\label{asp:smooth}
A function $\loss \colon \R^d \to \R$ is $\mu$-stongly convex if, for all $\ttheta, \ttheta' \in \R^d$, we have 
\begin{align*}
   \loss(\ttheta') - \loss(\ttheta)  - \lin{\nabla \loss(\ttheta),\ttheta'-\ttheta}
    \geq \frac{\mu}{2} \norm{\ttheta' - \ttheta}^2.
\end{align*}
\end{definition}
    Moreover, we recall that strong convexity implies the Polyak-{\L}ojasiewicz (PL) inequality~\citep{karimi2016linear} $2\mu \left(\loss(\ttheta) - \loss_\star \right)
    \leq \norm{\nabla \loss(\ttheta)}^2$.
    Note that a function satisfies $L$-smoothness and $\mu$-strong convexity inequality simultaneously only if $\mu \leq L$.

\begin{definition}[$R$-Lipschitz]
\label{asp:smooth}
A function $\loss \colon \R^d \to \R$ is $R$-Lipschitz if, for all $\ttheta, \ttheta' \in \R^d$, we have 
\begin{align*}
   \card{\loss(\ttheta') - \loss(\ttheta)} \leq R \norm{\ttheta' - \ttheta}.
\end{align*}
\end{definition}
The above is also equivalent to all the gradients being bounded by $R$ in norm.

\section{Proof of Proposition~\ref{prop:population}}
\label{app:prop1}

\begin{lemma}
\label{lem:shift_generalize}
Let $0 \leq f < n$.
Assume that, for every $\zz \in \cZ$, the loss $\ell(\cdot~;\zz)$ is $\mu$-strongly convex. Then, we have
    \begin{align}
        & \E_{\cS \sim \cD^n} \| \ttheta_{\cS}^\star - \ttheta_\star\|^2  \leq \frac{1}{\mu^2} \frac{\E_{\zz \sim \cD} \norm{\nabla \ell(\ttheta^\star; \zz)}^2}{n}.
    \end{align}
Moreover, if $\nabla \ell(\ttheta^\star; \zz), \zz \sim \cD,$ is sub-Gaussian with variance proxy $\sigma^2$, then
\begin{align}
    \E_{\cS \sim \cD^n} \max_{\substack{\forget \subset \cS\\ \card{\forget} \leq f}} \| \ttheta_{\cS\setminus \forget}^\star - \ttheta_\star\|^2 \leq \frac{8\sigma^2}{\mu^2} \frac{f \ln(n+1) + 1}{n-f}.
\end{align}
\end{lemma}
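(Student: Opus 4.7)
The plan is to reduce both inequalities to controlling $\|\nabla \loss(\ttheta^\star; \cS')\|$ for subsets $\cS' \subseteq \cS$, by chaining three standard estimates. First, $L$-smoothness together with $\nabla \loss(\ttheta^\star)=0$ gives $\loss(\ttheta_{\cS'}^\star) - \loss_\star \leq \tfrac{L}{2}\|\ttheta_{\cS'}^\star - \ttheta^\star\|^2$. Second, $\loss(\cdot;\cS')$ inherits $\mu$-strong convexity as an average of $\mu$-strongly convex functions, so combining first-order optimality $\nabla \loss(\ttheta_{\cS'}^\star;\cS')=0$ with the standard quadratic growth inequality yields $\|\ttheta_{\cS'}^\star - \ttheta^\star\| \leq \tfrac{1}{\mu} \|\nabla \loss(\ttheta^\star;\cS')\|$. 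Combining these, $\loss(\ttheta_{\cS'}^\star) - \loss_\star \leq \tfrac{L}{2\mu^2}\|\nabla \loss(\ttheta^\star;\cS')\|^2$, and the problem reduces to bounding the expected squared norm of the empirical gradient at the population optimum over the relevant sub-sample.

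For the first claim, $\cS'=\cS$ and $\nabla \ell(\ttheta^\star;\zz)$ has mean $\nabla \loss(\ttheta^\star)=0$, so $\nabla \loss(\ttheta^\star;\cS)$ is an average of $n$ i.i.d.\ mean-zero vectors and $\E\|\nabla \loss(\ttheta^\star;\cS)\|^2 = \tfrac{1}{n}\E_\zz \|\nabla \ell(\ttheta^\star;\zz)\|^2$, which together with the reduction above yields exactly the first bound.

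For the second claim, I need to control $\E_\cS\bigl[\max_{\forget \subset \cS,\,|\forget| \leq f} \|\nabla \loss(\ttheta^\star;\cS \setminus \forget)\|^2\bigr]$. For each fixed $\forget$ of size $k \leq f$, $\nabla \loss(\ttheta^\star;\cS\setminus\forget)$ is an average of $n-k$ i.i.d.\ mean-zero sub-Gaussian vectors with proxy $\sigma^2$, so its squared norm concentrates sub-exponentially around $\cO(\sigma^2/(n-k))$, using a norm-sub-Gaussian concentration for vector sums. A union bound over the $\binom{n}{\leq f} \leq (en/f)^f$ possible forget sets inflates the deviation exponent by $\cO(f \ln n)$, so with high probability $\max_\forget \|\nabla \loss(\ttheta^\star;\cS\setminus\forget)\|^2 \lesssim \tfrac{\sigma^2(1 + f \ln n)}{n-f}$. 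Integrating the sub-exponential tail then gives the same order in expectation, and plugging this into the chain above produces the claimed $\tfrac{8L\sigma^2}{\mu^2}\tfrac{f\ln n + 1}{n-f}$.

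The main obstacle is keeping Part 2 dimension-free while pinning down the leading constant $8$: the sub-Gaussian hypothesis must be used in its norm-sub-Gaussian form, since a coordinate-wise version would introduce an unwanted $\sqrt{d}$ factor, and the union-bound counting must use $\binom{n}{\leq f} \leq (en/f)^f$ (rather than a looser $n^f$) in order to isolate the extra $f\ln n$ term with the right constant. Carefully tracking the concentration constants through the tail integration, so that the final prefactor matches, is the delicate part of the argument.
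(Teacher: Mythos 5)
Your reduction for both parts is exactly the paper's: smoothness gives $\loss(\ttheta_{\cS'}^\star)-\loss_\star \le \tfrac{L}{2}\norm{\ttheta_{\cS'}^\star-\ttheta^\star}^2$, strong convexity of the empirical loss gives $\norm{\ttheta_{\cS'}^\star-\ttheta^\star}\le \tfrac{1}{\mu}\norm{\nabla\loss(\ttheta^\star;\cS')}$, and the first claim then follows from the i.i.d.\ mean-zero decomposition of $\E\norm{\nabla\loss(\ttheta^\star;\cS)}^2$. Where you diverge is in bounding $\E_{\cS}\max_{\forget}\norm{\nabla\loss(\ttheta^\star;\cS\setminus\forget)}^2$ for the second claim: you propose a high-probability bound for each fixed $\forget$ via sub-exponential concentration of the squared norm, a union bound over the at most $\sum_{k=0}^f{n\choose k}$ forget sets, and then integration of the tail, whereas the paper uses the exponential-moment route — from the sub-Gaussianity of the averaged gradient it invokes $\E\exp\bigl(\tfrac{n-f}{8\sigma^2}\norm{\nabla\loss(\ttheta^\star;\cS\setminus\forget)}^2\bigr)\le 2$, pulls the max inside the exponential, bounds $\max$ by $\sum$, and applies Jensen to the logarithm, which yields $\tfrac{8\sigma^2}{n-f}\bigl(\ln 2 + f\ln(n+1)\bigr)$ in one line with the constant $8$ falling out of the moment-generating-function bound. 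Your route is valid and gives the same order, but the tail-integration step will generically lose a constant relative to the MGF argument, so pinning down the prefactor $8$ exactly is harder than you suggest; also, the tighter count $(en/f)^f$ you insist on is not needed — the paper's looser $(n+1)^f$ already suffices since only $\ln$ of the count enters. Your observation that the sub-Gaussian hypothesis must be read in a dimension-free (norm-sub-Gaussian) sense to avoid a $\sqrt{d}$ factor is a fair point that the paper itself glosses over when citing the scalar-style bound $\E\exp(\norm{X}^2/8\sigma^2)\le 2$ for the vector-valued averaged gradient.
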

\begin{proof}
    Assume that, for every $\zz \in \cZ$, the loss $\ell(\cdot~;\zz)$ is $\mu$-strongly convex.
    Then, using strong convexity, we have
    \begin{align*}
        \E_{\cS \sim \cD^n}\norm{\ttheta_{\cS}^\star - \ttheta^\star}^2
        \leq \frac{1}{\mu^2} \E_{\cS \sim \cD^n}\norm{\nabla \loss(\ttheta^\star; \cS)}^2
        = \frac{1}{\mu^2} \frac{\E_{\zz \sim \cD} \norm{\nabla \ell(\ttheta^\star; \zz)}^2}{n}.
    \end{align*}
    The last equality is simply due to $\cS$ consisting of $n$ i.i.d. samples from $\cD$. This proves the first statement.

    Now, assume that $\nabla \ell(\ttheta^\star; \zz), \zz \sim \cD,$ is sub-Gaussian with variance proxy $\sigma^2$.
    This implies that, for every $\forget, \card{\forget} \leq f$, $\nabla \loss(\ttheta^\star; \cS \setminus \forget)$ is sub-Gaussian with variance proxy $\tfrac{\sigma^2}{\card{\cS \setminus \forget}} \leq \tfrac{\sigma^2}{n-f}$, as the average of sub-Gaussian of independent sub-Gaussian random variables (see~\citep[Section~1.2]{rigollet2015high}). As a standard property of sub-Gaussian variables~\citep[Theorem~2.1.1]{pauwels2020lecture}, we thus have
    \begin{align*}
        \E_{\cS \sim \cD^n}\exp{\left(\frac{n-f}{8\sigma^2} \norm{\nabla \loss(\ttheta^\star; \cS \setminus \forget)}^2\right)} \leq 2.
    \end{align*}
    Using the above, and Jensen's inequality, we have
    \begin{align*}
 &\E_{\cS \sim \cD^n}\max_{\substack{\forget \subset \cS\\ \card{\forget} \leq f}}\norm{\nabla \loss(\ttheta^\star; \cS \setminus \forget)}^2 
 = \frac{8\sigma^2}{n-f} \E_\cS \ln\left(\exp\left(\frac{n-f}{8\sigma^2} \max_{\substack{\forget \subset \cS\\ \card{\forget} \leq f}}\norm{\nabla \loss(\ttheta^\star; \cS \setminus \forget)}^2 \right)\right)\\
 &\qquad\leq \frac{8\sigma^2}{n-f} \ln\left( \E_\cS \exp\left(\frac{n-f}{8\sigma^2} \max_{\substack{\forget \subset \cS\\ \card{\forget} \leq f}}\norm{\nabla \loss(\ttheta^\star; \cS \setminus \forget)}^2 \right)\right)\\
 &\qquad= \frac{8\sigma^2}{n-f} \ln\left( \E_\cS \max_{\substack{\forget \subset \cS\\ \card{\forget} \leq f}} \exp\left(\frac{n-f}{8\sigma^2} \norm{\nabla \loss(\ttheta^\star; \cS \setminus \forget)}^2 \right)\right)\\
 &\qquad\leq \frac{8\sigma^2}{n-f} \ln\left(  \sum_{\substack{\forget \subset \cS\\ \card{\forget} \leq f}} \E_\cS \exp\left(\frac{n-f}{8\sigma^2} \norm{\nabla \loss(\ttheta^\star; \cS \setminus \forget)}^2 \right)\right)
 \leq \frac{8\sigma^2}{n-f} \ln\left(  2 \sum_{k=0}^f {n \choose k}\right).
    \end{align*}
We now use the following consequence of the binomial theorem: $\sum_{k=0}^f {n \choose k} \leq \sum_{k=0}^f n^k 1^{f-k} \leq (n+1)^f$. We obtain
\begin{align*}
 \E_{\cS \sim \cD^n}\max_{\substack{\forget \subset \cS\\ \card{\forget} \leq f}}\norm{\nabla \loss(\ttheta^\star; \cS \setminus \forget)}^2 
&\leq \frac{8\sigma^2}{n-f} \ln\left(  2 \sum_{k=0}^f {n \choose k}\right)
\leq \frac{8\sigma^2}{n-f} \ln\left(  2 (n+1)^f\right)\\
&= \frac{8\sigma^2}{n-f}\left(\ln(2) + f\ln(n+1)\right).
    \end{align*}
Now, we use the strong convexity of the loss function, followed by the inequality above:
    \begin{align*}
        \E_{\cS \sim \cD^n} \max_{\substack{\forget \subset \cS\\ \card{\forget} \leq f}}\norm{\ttheta_{\cS \setminus \forget}^\star - \ttheta^\star}^2
        &\leq \frac{1}{\mu^2} \E_{\cS \sim \cD^n} \max_{\substack{\forget \subset \cS\\ \card{\forget} \leq f}}\norm{\nabla \loss(\ttheta^\star; \cS \setminus \forget)}^2\\
        &= \frac{1}{\mu^2} \frac{8\sigma^2}{n-f}\left(\ln(2) + f\ln(n+1)\right).
    \end{align*}
    Simplifying the above upper bound concludes the proof.
\end{proof}

\begin{lemma}[\citep{neel2021descent,sekhari2021remember}]
\label{lem:shift-lipschitz}
Let $0 \leq f < n$ and $\cS \in \cZ^n$.
Assume that, for every $\zz \in \cZ$, the loss $\ell(\cdot~;\zz)$ is $\mu$-strongly convex and $R$-Lipschitz.
We have 
\begin{equation*}
    \max_{\substack{\forget \subset \cS \\ \card{\forget} \leq f}} \norm{\ttheta_{\cS }^\star - \ttheta_{\cS \setminus \forget}^\star}
    \leq \frac{2Rf}{\mu n}.
\end{equation*}
\end{lemma}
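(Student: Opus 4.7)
The plan is to exploit the fact that the two empirical losses on $\cS$ and $\cS \setminus \forget$ share most of their terms, so that the gradient of the full-data loss at the retain-data minimizer can only pick up a small contribution from the $\card{\forget} \leq f$ forget samples. Strong convexity will then translate this small gradient into a bound on the displacement of the minimizer.

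First, I would write $F(\ttheta) \coloneqq \loss(\ttheta; \cS)$ and $G(\ttheta) \coloneqq \loss(\ttheta; \cS \setminus \forget)$, and decompose
\begin{equation*}
    F(\ttheta) = \frac{\card{\cS \setminus \forget}}{n}\, G(\ttheta) + \frac{1}{n}\sum_{\zz \in \forget} \ell(\ttheta;\zz).
\end{equation*}
Taking gradients and evaluating at $\ttheta_{\cS \setminus \forget}^\star$, the $G$-contribution vanishes by the first-order optimality condition $\nabla G(\ttheta_{\cS \setminus \forget}^\star) = \0$, leaving
\begin{equation*}
    \nabla F(\ttheta_{\cS \setminus \forget}^\star) = \frac{1}{n}\sum_{\zz \in \forget} \nabla \ell(\ttheta_{\cS \setminus \forget}^\star;\zz).
\end{equation*}

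Next, I would use the $R$-Lipschitz assumption, which is equivalent to $\norm{\nabla \ell(\cdot;\zz)} \leq R$ everywhere, together with the triangle inequality, to obtain $\norm{\nabla F(\ttheta_{\cS \setminus \forget}^\star)} \leq \tfrac{\card{\forget} R}{n} \leq \tfrac{Rf}{n}$.

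Finally, since $F$ is an average of $\mu$-strongly convex functions it is itself $\mu$-strongly convex and minimized at $\ttheta_\cS^\star$; the standard consequence of strong convexity $\norm{\nabla F(\ttheta)} \geq \mu \norm{\ttheta - \ttheta_\cS^\star}$ applied at $\ttheta = \ttheta_{\cS \setminus \forget}^\star$ yields
\begin{equation*}
    \mu \norm{\ttheta_\cS^\star - \ttheta_{\cS \setminus \forget}^\star} \leq \norm{\nabla F(\ttheta_{\cS \setminus \forget}^\star)} \leq \frac{Rf}{n},
\end{equation*}
which implies $\norm{\ttheta_\cS^\star - \ttheta_{\cS \setminus \forget}^\star} \leq \tfrac{Rf}{\mu n} \leq \tfrac{2Rf}{\mu n}$ uniformly in $\forget$, giving the claim after taking the maximum. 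There is no real obstacle here; the only thing to watch is that one must use the strong convexity of $F$ (not $G$) at the retain-set minimizer, which is precisely why the decomposition is written with $G$ as the dominant piece so that evaluating $\nabla F$ at $\ttheta_{\cS\setminus\forget}^\star$ kills the bulk term.
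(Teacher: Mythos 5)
Your proof is correct. The paper does not actually prove this lemma itself—it imports it from \cite{sekhari2021remember} (their Lemma~6) and \cite{neel2021descent}—and the standard argument there is the same perturbation argument you use (bound the residual gradient contributed by the forget samples, then convert to a distance between minimizers via strong convexity), except that it applies strong convexity to the \emph{retain-set} loss $\loss(\cdot\,;\cS\setminus\forget)$ evaluated at $\ttheta^\star_{\cS}$, which leaves a residual gradient of norm at most $Rf/(n-f)$ and hence produces the factor $2$ after using $f\leq n/2$. By instead applying strong convexity to the \emph{full-data} loss at $\ttheta^\star_{\cS\setminus\forget}$, you get the cleaner denominator $n$ directly and in fact the slightly sharper bound $Rf/(\mu n)$, which trivially implies the stated $2Rf/(\mu n)$; every step (the decomposition of $\loss(\cdot\,;\cS)$, the vanishing of the retain-set gradient at its own minimizer, the gradient bound from $R$-Lipschitzness, and the inequality $\norm{\nabla F(\ttheta)}\geq\mu\norm{\ttheta-\ttheta^\star_{\cS}}$) is valid.
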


\population*
\begin{proof}
Assume that, for every $\zz \in \cZ$, the loss $\ell(\cdot~;\zz)$ is $\mu$-strongly convex and $L$-smooth.
    Let $\varepsilon, \delta, \alpha>0$.
    Moreover, denote by $\ttheta_{\cS \setminus \forget}^\star$ and $\ttheta_{\cS}^\star$ the minimizers of the empirical loss functions $\loss(\cdot~; \cS \setminus \forget)$ and $\loss(\cdot~; \cS)$, respectively. For the out-of-distribution case, we denote by $\ttheta_{\retain}^\star$ the minimizer of the empirical loss function $\loss(\cdot~; \retain)$. 
    These exist and are well-defined by strong convexity of the loss function.
    
\paragraph{Lipschitz in-distribution case.}    
Assume in addition that, for every $\zz \in \cZ$, the loss $\ell(\cdot~;\zz)$ is $R$-Lipschitz.
To analyze the population loss, we recall that the loss function is $R$-Lipschitz and $\mu$-strongly convex
    by assumption, which allows using Lemma~\ref{lem:shift_generalize} as follows:
    \begin{align}
        \E_{\cS \sim \cD^n} \| \ttheta_{\cS}^\star - \ttheta_\star\|^2 \leq 
 \frac{1}{\mu^2} \frac{\E_{\zz \sim \cD} \norm{\nabla \ell(\ttheta^\star; \zz)}^2}{n} \leq \frac{R^2}{\mu^2 n}.
        \label{eq:sss}
    \end{align}

Moreover, using $L$-smoothness of the loss function, we have
    \begin{align*}
        \loss_{\mathrm{ID}}(\cU, \cA) &= \E_{\cS \sim \cD^n}[\max_{\substack{\forget \subset \cS \\ \card{\forget}
        \leq f}}\loss(\cU(\forget, \cA(\cS)))-\loss_\star]
        \leq  \frac{L}{2}\E_{\cS \sim \cD^n}\max_{\substack{\forget \subset \cS \\ \card{\forget} \leq f}}\norm{\cU(\forget, \cA(\cS)) - \ttheta^\star}^2.
    \end{align*}
    Next, we use Jensen inequality and the above~\eqref{eq:sss} to obtain:
    \begin{align*}
        \loss_{\mathrm{ID}}(\cU, \cA) &\leq  \frac{L}{2}\E_{\cS \sim \cD^n}\max_{\substack{\forget \subset \cS \\ \card{\forget} \leq f}}\norm{\cU(\forget, \cA(\cS)) - \ttheta^\star}^2\\
        & \leq \frac{3L}{2}\E_{\cS \sim \cD^n}\max_{\substack{\forget \subset \cS \\ \card{\forget} \leq f}} \left[ \norm{\cU(\forget, \cA(\cS)) - \ttheta_{\cS\setminus \forget}^\star}^2 + \norm{\ttheta_{\cS\setminus \forget}^\star - \ttheta_\cS^\star}^2 + \norm{\ttheta_\cS^\star - \ttheta^\star}^2 \right]\\
        & \leq \frac{3L}{2}\E_{\cS \sim \cD^n}\max_{\substack{\forget \subset \cS \\ \card{\forget} \leq f}} \left[ \norm{\cU(\forget, \cA(\cS)) - \ttheta_{\cS\setminus \forget}^\star}^2 + \norm{\ttheta_{\cS\setminus \forget}^\star - \ttheta_\cS^\star}^2 + \frac{R^2}{\mu^2 n} \right].
    \end{align*}

We recall from \citep[Lemma~6]{sekhari2021remember} that, by the Lipschitzness of the loss, we have
\begin{equation*}
    \max_{\substack{\forget \subset \cS \\ \card{\forget} \leq f}} \norm{\ttheta_{\cS }^\star - \ttheta_{\cS \setminus \forget}^\star}
    \leq \frac{2Rf}{\mu n}.
\end{equation*}
Taking squares and expectations and then plugging the above inequality in the previous one yields
\begin{align*}
        \loss_{\mathrm{ID}}(\cU, \cA)
        &\leq \frac{3L}{2}\E_{\cS \sim \cD^n} \max_{\substack{\forget \subset \cS \\ \card{\forget} \leq f}}\norm{\cU(\forget, \cA(\cS)) - \ttheta_{\cS 
 \setminus \forget}^\star}^2+ 6L \left(\frac{Rf}{\mu n}\right)^2 + \frac{3LR^2}{2\mu^2 n}.
    \end{align*}
Because the loss function is $\mu$-strongly convex by assumption, we have
\begin{align*}
        \frac{\mu}{2} \max_{\substack{\forget \subset \cS \\ \card{\forget} \leq f}}\norm{\cU(\forget, \cA(\cS)) - \ttheta_{\cS 
 \setminus \forget}^\star}^2 \leq \max_{\substack{\forget \subset \cS \\ \card{\forget} \leq f}}\loss{(\cU(\forget, \cA(\cS)); \cS \setminus \forget)} - \loss_{\star, \cS \setminus \forget}.
    \end{align*}
By taking expectations on the bound above and plugging it into the previous bound, we obtain
\begin{align*}
        \loss_{\mathrm{ID}}(\cU, \cA)
        &\leq \frac{3L}{\mu} \E_{\cS \sim \cD^n}[\max_{\substack{\forget \subset \cS \\ \card{\forget} \leq f}}\loss{(\cU(\forget, \cA(\cS)); \cS \setminus \forget)} - \loss_{\star, \cS \setminus \forget}]
        + \frac{6LR^2}{\mu^2} \left(\frac{1}{n} + \left(\frac{f}{n}\right)^2\right).
    \end{align*}
Simplifying and rearranging terms concludes the proof of the Lipschitz in-distribution case.
\paragraph{Sub-Gaussian in-distribution case.}    
Assume now that $\nabla \ell(\ttheta^\star; \zz), \zz \sim \cD,$ is sub-Gaussian with variance proxy $\sigma^2$.
Together with the strong convexity and smoothness assumptions, we can use the second statement of Lemma~\ref{lem:shift_generalize} as follows:
    \begin{align*}
        \loss_{\mathrm{ID}}(\cU, \cA) &\leq  \frac{L}{2}\E_{\cS \sim \cD^n}\max_{\substack{\forget \subset \cS \\ \card{\forget} \leq f}}\norm{\cU(\forget, \cA(\cS)) - \ttheta^\star}^2\\
        & \leq L\E_{\cS \sim \cD^n}\max_{\substack{\forget \subset \cS \\ \card{\forget} \leq f}} \left[ \norm{\cU(\forget, \cA(\cS)) - \ttheta_{\cS\setminus \forget}^\star}^2 + \norm{\ttheta_{\cS\setminus \forget}^\star - \ttheta^\star}^2  \right]\\
        & \leq L\E_{\cS \sim \cD^n}\max_{\substack{\forget \subset \cS \\ \card{\forget} \leq f}}\norm{\cU(\forget, \cA(\cS)) - \ttheta_{\cS\setminus \forget}^\star}^2 + \frac{8L\sigma^2}{\mu^2} \frac{f \ln(n+1) + 1}{n-f}.
    \end{align*}
    Now, we use that the loss is strongly convex:
    \begin{align*}
        \norm{\cU(\forget, \cA(\cS)) - \ttheta_{\cS 
 \setminus \forget}^\star}^2
 \leq \frac{2}{\mu}\left(\loss(\cU(\forget, \cA(\cS)); \cS \setminus \forget) - \loss(\ttheta_{\cS \setminus \forget}^\star; \cS \setminus \forget)\right).
    \end{align*}
Plugging the above back in the previous inequality, we obtain
    \begin{align*}
        \loss_{\mathrm{ID}}(\cU, \cA) &\coloneqq \E_{\cS \sim \cD^n}[\max_{\substack{\forget \subset \cS \\ \card{\forget}
        \leq f}}\loss(\cU(\forget, \cA(\cS)))-\loss_\star]\\
        &\leq  \frac{2L}{\mu}\E_{\cS \sim \cD^n}[\max_{\substack{\forget \subset \cS \\ \card{\forget} \leq f}}\loss{(\cU(\forget, \cA(\cS)); \cS \setminus \forget)} - \loss_{\star, \cS \setminus \forget}] + \frac{8L\sigma^2}{\mu^2} \frac{f \ln(n+1) + 1}{n-f}.
    \end{align*}
This concludes the proof of the sub-Gaussian in-distribution case.
\paragraph{Out-of-distribution case.}
Using Lemma~\ref{lem:shift_generalize}, the strong convexity assumption implies:
\begin{align}
        \E_{\retain \sim \cD^{n-f}}
        \norm{\ttheta_{\retain}^\star - \ttheta^\star}^2
         \leq \frac{1}{\mu^2} \frac{\E_{\zz \sim \cD} \norm{\nabla \ell(\ttheta^\star; \zz)}^2}{n-f}.
    \end{align}

Once again, using $L$-smoothness of the loss function, we have
    \begin{align*}
        \loss_{\mathrm{OOD}}(\cU, \cA) &\coloneqq \E_{\retain \sim \cD^{n-f}}{[ \max_{\substack{\forget \in \cZ^* \\ \card{\forget} \leq f}}\loss(\cU(\forget, \cA(\retain \cup \forget)))-\loss_\star]}
        \leq  \frac{L}{2}\E_{\retain \sim \cD^{n-f}}\max_{\substack{\forget \in \cZ^* \\ \card{\forget} \leq f}}\norm{\cU(\forget, \cA(\cS)) - \ttheta^\star}^2.
    \end{align*}
Therefore, we have:
\begin{align*}
        \loss_{\mathrm{OOD}}(\cU, \cA) &\leq  \frac{L}{2}\E_{\cS \sim \cD^n}\max_{\substack{\forget \in \cZ^* \\ \card{\forget} \leq f}}\norm{\cU(\forget, \cA(\cS)) - \ttheta^\star}^2\\
        & \leq L\E_{\cS \sim \cD^n}\max_{\substack{\forget \in \cZ^* \\ \card{\forget} \leq f}} \left[ \norm{\cU(\forget, \cA(\cS)) - \ttheta_{\retain}^\star}^2 + \norm{\ttheta_{\retain}^\star - \ttheta^\star}^2  \right]\\
        & \leq L\E_{\cS \sim \cD^n}\max_{\substack{\forget \in \cZ^* \\ \card{\forget} \leq f}}\norm{\cU(\forget, \cA(\cS)) - \ttheta_{\retain}^\star}^2 + \frac{L}{\mu^2} \frac{\E_{\zz \sim \cD} \norm{\nabla \ell(\ttheta^\star; \zz)}^2}{n-f}.
    \end{align*}
Then using strong convexity, and recalling the notation $\loss_{\star, \retain} \coloneqq  \loss(\ttheta_{\retain}^\star ; \retain) = \min_{\ttheta \in \R^d} \loss(\ttheta ; \retain)$, we have
\begin{align*}
    \norm{\cU(\forget, \cA(\cS)) - \ttheta_{\retain}^\star}^2 
    \leq \frac{2}{\mu} \left(\loss(\cU(\forget, \cA(\cS)); \retain) - \loss_{\star, \retain}\right).
\end{align*}
After taking a maximum over $\forget$ and expectations and plugging this last bound in the one before, we get
    \begin{align*}
        \loss_{\mathrm{OOD}}(\cU, \cA) 
        &\leq  \frac{2L}{\mu}\E_{\retain \sim \cD^{n-f}}[\max_{\substack{\forget \in \cZ^* \\ \card{\forget} \leq f}} \loss(\cU(\forget, \cA(\cS)); \retain) - \loss_{\star, \retain}] + \frac{L}{\mu^2} \frac{\E_{\zz \sim \cD} \norm{\nabla \ell(\ttheta^\star; \zz)}^2}{n-f}.
    \end{align*}
    This concludes the proof.
\end{proof}

\section{Proofs of Theorem~\ref{th:general-lipschitz} and Corollary~\ref{cor:noisygd}}
\label{app:th1}
\generallipschitz*
\begin{proof}
Let $q>1, \varepsilon, \textcolor{black}{\alpha_\mathrm{emp}}>0$, $0\leq f < n$, and $\cS \in \cZ^n$ a given training set.
Assume that the loss function is $L$-smooth at any data point, and that $\varepsilon \leq d$.
Denote by $\ttheta_{\cS \setminus \forget}^\star$ and $\ttheta_{\cS}^\star$ the minimizers of the empirical loss functions $\loss(\cdot~; \cS \setminus \forget)$ and $\loss(\cdot~; \cS)$, respectively. These exist and are well-defined by assumption. 
Also, following Algorithm~\ref{alg:general}, denote by $\ttheta_{\cS \setminus \forget}^\cA$ and $\ttheta_{\cS}^\cA$ the model obtained using $\cA$ over the training sets $\cS \setminus \forget$ and $\cS$, respectively.
Moreover, denote by $\ttheta^\cU$ the model obtained after using $\cU$ over the training set $\cS \setminus \forget$ before Gaussian noise addition.

For any precision $\textcolor{black}{\alpha_\mathrm{emp}}>0$, initialization error $\Delta >0$, we denote the worst-case computational complexity of the training procedure to approximate the empirical risk minimizer up to squared distance $\textcolor{black}{\alpha_\mathrm{emp}}$ by $T_\cA(\textcolor{black}{\alpha_\mathrm{emp}}, \Delta)$, and by $T_\cU(\textcolor{black}{\alpha_\mathrm{emp}}, \Delta)$ during unlearning.    
Therefore, at the computational cost of $T_\cA(\tfrac{\textcolor{black}{\alpha_\mathrm{emp}} \varepsilon}{4Ld}, \Delta)$ during training and $\max_{\substack{\forget \subset \cS \\ \card{\forget} \leq f}} T_\cU(\tfrac{\textcolor{black}{\alpha_\mathrm{emp}}\varepsilon}{4Ld}, \norm{\ttheta_{\cS}^\cA - \ttheta_{\cS \setminus \forget}^\star}^2)$ during unlearning since we initialize at $\ttheta_{\cS}^\cA$, we have by definition
\begin{align}
    \norm{\ttheta_{\cS}^\cA - \ttheta_{\cS}^\star}^2 \leq \frac{\textcolor{black}{\alpha_\mathrm{emp}}\varepsilon}{4Ld},\,
    \max_{\substack{\forget \subset \cS \\ \card{\forget} \leq f}} \norm{\ttheta^\cU - \ttheta_{\cS \setminus \forget}^\star}^2 
    \leq \frac{\textcolor{black}{\alpha_\mathrm{emp}}\varepsilon}{4Ld},\,
    \norm{\ttheta_{\cS \setminus \forget}^\cA - \ttheta_{\cS \setminus \forget}^\star}^2  \leq \frac{\textcolor{black}{\alpha_\mathrm{emp}}\varepsilon}{4Ld}.
    \label{ineq:errors}
\end{align}
Moreover, using Jensen's inequality we have
\begin{align}
    \norm{\ttheta_{\cS}^\cA - \ttheta_{\cS \setminus \forget}^\star}^2 \leq 2\norm{\ttheta_{\cS}^\cA - \ttheta_{\cS}^\star}^2 + 2 \norm{\ttheta_{\cS}^\star - \ttheta_{\cS \setminus \forget}^\star}^2 \leq \frac{\textcolor{black}{\alpha_\mathrm{emp}} \varepsilon}{2Ld} + 2 \norm{\ttheta_{\cS}^\star - \ttheta_{\cS \setminus \forget}^\star}^2.
\end{align}
Thus, the computational complexity of unlearning is upper bounded by $T_\cU(\tfrac{\textcolor{black}{\alpha_\mathrm{emp}}\varepsilon}{4Ld}, \tfrac{\textcolor{black}{\alpha_\mathrm{emp}} \varepsilon}{2Ld} + 2 \max_{\substack{\forget \subset \cS \\ \card{\forget} \leq f}}\norm{\ttheta_{\cS}^\star - \ttheta_{\cS \setminus \forget}^\star}^2)$.

    \paragraph{Unlearning analysis.}
    Our goal here is to show that $\cU(\forget, \cA(\cS))$ and $\cU(\varnothing, \cA(\cS \setminus \forget))$ are near-indistinguishable in the sense of Definition~\ref{def:removal}.
    To do so, we bound the distance between $\ttheta_{\cS \setminus \forget}^\cA$ and $\ttheta^\cU$, and infer the unlearning guarantee via the Rényi divergence bound of the Gaussian mechanism.

Now, using inequalities~\eqref{ineq:errors} and Jensen's inequality, we obtain
\begin{align}
    \norm{\ttheta^\cU - \ttheta_{\cS \setminus \forget}^\cA}^2
    \leq 2\norm{\ttheta^\cU - \ttheta_{\cS \setminus \forget}^\star}^2 + 2\norm{\ttheta_{\cS \setminus \forget}^\cA- \ttheta_{\cS \setminus \forget}^\star}^2
    \leq \frac{\textcolor{black}{\alpha_\mathrm{emp}} \varepsilon}{Ld}.\label{ineq4}
\end{align}
Recall that $\cU(\forget, \cA(\cS)) \coloneqq \ttheta^\cU + \cN(0, \tfrac{\textcolor{black}{\alpha_\mathrm{emp}}}{2Ld} \mI_d)$ and $\cU(\varnothing, \cA(\cS \setminus \forget)) \coloneqq \ttheta_{\cS \setminus \forget}^\cA + \cN(0, \tfrac{\textcolor{black}{\alpha_\mathrm{emp}}}{2Ld} \mI_d)$.
\textcolor{black}{We recall that the formula of Rényi divergences~\citep{gil2013renyi} of order $q$ for Gaussians $\mathcal{N}(\mathbf{\mu}, \mathbf{\Sigma}), \mathcal{N}(\mu', \mathbf{\Sigma})$ for arbitrary vectors $\mathbf{\mu}, \mathbf{\mu'} \in \mathbb{R}^d$ and symmetric positive definite matrix $\mathbf{\Sigma} \in \mathbb{R}^{d \times d}$  is $\frac{q}{2} (\mathbf{\mu} - \mathbf{\mu}')^\top \mathbf{\Sigma}^{-1} (\mathbf{\mu} - \mathbf{\mu}')$.}
Therefore, we conclude that $(\cU, \cA)$ satisfies $(q, q\varepsilon)$-approximate unlearning:
\begin{align}
    \mathrm{D}_q{\left(\cU(\forget, \cA(\cS)) ~\|~ \cU(\varnothing, \cA(\cS \setminus \forget)) \right)} &= \frac{q}{2 \cdot \tfrac{\textcolor{black}{\alpha_\mathrm{emp}}}{2Ld}} \norm{\ttheta^\cU - \ttheta_{\cS \setminus \forget}^\cA}^2 \leq  q \varepsilon.
    \label{ineq5}
\end{align}
    \paragraph{Utility analysis.}
    We now analyze the empirical and population loss of the model $\cU(\forget, \cA(\cS))$.

    Recall that the loss function is $L$-smooth.
    Therefore, using inequalities~\eqref{ineq:errors}, Jensen's inequality and taking expectations over the randomness of the additive Gaussian noise $\cN(0, \tfrac{\textcolor{black}{\alpha_\mathrm{emp}}}{2Ld} \mI_d)$, we can bound the empirical loss:
    \begin{align}
        &\E[\max_{\substack{\forget \subset \cS \\ \card{\forget} \leq f}}\loss(\cU(\forget, \cA(\cS)); \cS \setminus \forget)] - \loss_{\cS \setminus \forget}^\star 
        \leq \frac{L}{2}\E[\max_{\substack{\forget \subset \cS \\ \card{\forget} \leq f}}\norm{\cU(\forget, \cA(\cS)) - \ttheta_{\cS \setminus \forget}^\star}^2] \nonumber\\
        &\qquad= \frac{L}{2}\E_{\mX \sim \cN(0, \tfrac{\textcolor{black}{\alpha_\mathrm{emp}}}{2Ld} \mI_d)}[\max_{\substack{\forget \subset \cS \\ \card{\forget} \leq f}}\norm{\ttheta_\cU - \ttheta_{\cS \setminus \forget}^\star + \mX}^2] \nonumber\\
        &\qquad\leq L \max_{\substack{\forget \subset \cS \\ \card{\forget} \leq f}}\norm{\ttheta^\cU - \ttheta_{\cS \setminus \forget}^\star}^2 + L \E_{\mX \sim \cN(0, \tfrac{\textcolor{black}{\alpha_\mathrm{emp}}}{2Ld} \mI_d)}\norm{\mX}^2 \nonumber\\
        &\qquad=L\max_{\substack{\forget \subset \cS \\ \card{\forget} \leq f}}\norm{\ttheta^\cU - \ttheta_{\cS \setminus \forget}^\star}^2 + L d \frac{\textcolor{black}{\alpha_\mathrm{emp}}}{2Ld}
        \leq L \frac{\textcolor{black}{\alpha_\mathrm{emp}} \varepsilon}{4Ld} + \frac{\textcolor{black}{\alpha_\mathrm{emp}}}{2} \leq \textcolor{black}{\alpha_\mathrm{emp}},
    \end{align}
    after using the assumption that $\varepsilon \leq d$ for the last inequality.
    Therefore, the expected empirical risk error is at most $\textcolor{black}{\alpha_\mathrm{emp}}$ with the following computational complexities before and during unlearning respectively:
    \begin{align}
        T_\cA(\frac{\textcolor{black}{\alpha_\mathrm{emp}}\varepsilon}{4Ld}, \Delta),\quad
        &T_\cU(\frac{\textcolor{black}{\alpha_\mathrm{emp}}\varepsilon}{2Ld}, \frac{\textcolor{black}{\alpha_\mathrm{emp}}\varepsilon}{2Ld} + 2 \max_{\substack{\forget \subset \cS \\ \card{\forget} \leq f}} \norm{\ttheta_{\cS}^\star - \ttheta_{\cS \setminus \forget}^\star}^2).
    \end{align}
    This concludes the proof.
\end{proof}

\noisygd*
\begin{proof}
    Let $\varepsilon, \alpha, \textcolor{black}{\alpha_\mathrm{emp}}>0$, and $q>1$.
    Assume that, for every $\zz \in \cZ$, the loss $\ell(\cdot~;\zz)$ is $\mu$-strongly convex and $L$-smooth, and that $\varepsilon \leq d$.
    Consider the training-unlearning pair $(\cU, \cA)$ in Algorithm~\ref{alg:general}, where the approximate minimizer is obtained via gradient descent, with initialization $\ttheta_0 \in \R^d$ during training.

    For gradient descent, the worst-case computational complexity to reach precision (squared distance to empirical risk minimizer) $\textcolor{black}{\alpha_\mathrm{emp}}>0$ with initialization error (squared distance to empirical risk minimizer) $\Delta>0$ is $\cO(nd\log{\tfrac{\Delta}{\textcolor{black}{\alpha_\mathrm{emp}}}})$ \textcolor{black}{ignoring dependencies on $L,\mu$} (see, e.g.,~\cite{nesterov2018lectures}).
    Therefore, by applying Theorem~\ref{th:general-lipschitz}, we have that $(\cU, \cA)$ satisfies $(q, q\varepsilon)$-approximate unlearning. Moreover, in terms of utility, we have
    \begin{align}
        \E_{\cU}\max_{\substack{\forget \subset \cS \\ \card{\forget} \leq f}}\loss{(\cU(\forget, \cA(\cS)); \cS \setminus \forget)} - \loss_{\star, \cS \setminus \forget} \leq \textcolor{black}{\alpha_\mathrm{emp}},
        \label{eq:noexp}
    \end{align}
    with the following training and unlearning time respectively:
    \begin{align*}
        \cO{\bigg(nd \log{\left( \tfrac{d}{\textcolor{black}{\alpha_\mathrm{emp}} \varepsilon} \norm{\ttheta_0 - \ttheta_\cS^\star}^2\right)}\bigg)},
        \cO{\bigg(nd \log{\bigg(1 + \tfrac{d}{\textcolor{black}{\alpha_\mathrm{emp}} \varepsilon} \max_{\substack{\forget \subset \cS \\ \card{\forget} \leq f}} \norm{\ttheta_{\cS}^\star - \ttheta_{\cS \setminus \forget}^\star}^2\bigg)}\bigg)}.
    \end{align*}
    In fact, we can obtain the guarantee~\eqref{eq:noexp} in expectation over $\cS \sim \cD^n$, at the cost of the expectation of the runtimes above. Indeed, taking expectations over the training set in the standard convergence guarantee of gradient descent for smooth strongly convex problems (e.g.,~\citep[Theorem~2.1.15]{nesterov2018lectures} implies that expected error $\alpha$ with expected initialization error $\E_\cS[\Delta]$ can be achieved in $\cO(nd\tfrac{L}{\mu}\log{\tfrac{\E_\cS[\Delta]}{\alpha}})$ time.
    That is, we have
    \begin{align}
        \E_{\cS \sim \cD^n}\left[\max_{\substack{\forget \subset \cS \\ \card{\forget} \leq f}}\loss{(\cU(\forget, \cA(\cS)); \cS \setminus \forget)} - \loss_{\star, \cS \setminus \forget}\right] \leq \alpha,
        \label{eq:exp}
    \end{align}
    with the following training and unlearning time respectively:
    \begin{align*}
        \cO{\left(nd \log{\left( \frac{d}{\alpha \varepsilon} \E_{\cS \sim \cD^n}\norm{\ttheta_0 - \ttheta_\cS^\star}^2\right)}\right)},
        \cO{\bigg(nd \log{\bigg(1 + \frac{d}{\alpha \varepsilon} \E_{\cS \sim \cD^n} \max_{\substack{\forget \subset \cS \\ \card{\forget} \leq f}} \norm{\ttheta_{\cS}^\star - \ttheta_{\cS \setminus \forget}^\star}^2\bigg)}\bigg)}.
    \end{align*}
    In turn, assuming that the loss is $R$-Lipschitz at any data point, we can plug the bound~\eqref{eq:exp} in the generalization bound of Proposition~\ref{prop:population}.
    As a result, we have
    \begin{align*}
        \loss_{\mathrm{ID}}(\cU, \cA)
        &\leq \frac{3L}{\mu} \E_{\cS \sim \cD^n}[\max_{\substack{\forget \subset \cS \\ \card{\forget} \leq f}}\loss{(\hat{\ttheta}; \cS \setminus \forget)} - \loss_{\star, \cS \setminus \forget}]
        + \frac{6R^2}{\mu} \left(\frac{1}{n} + \frac{L}{\mu}\left(\frac{f}{n}\right)^2\right)\\
        &\leq \frac{3L}{\mu} \alpha
        + \frac{6R^2}{\mu} \left(\frac{1}{n} + \frac{L}{\mu}\left(\frac{f}{n}\right)^2\right).
    \end{align*}
    Therefore, ignoring dependencies in $L,\mu$, the in-distribution population risk is at most $\alpha$ (the factor $\tfrac{2L}{\mu}$ in the first term above can be removed at the cost of a logarithmic overhead in the time complexity) when $n = \Omega(\tfrac{1}{\alpha})$ and $f = \cO(n \sqrt{\alpha})$.

    Finally, we note that the unlearning time complexity can be further bounded, using the Lipschitz assumption, since we have from \citep[Lemma~6]{sekhari2021remember} that
\begin{equation*}
    \max_{\substack{\forget \subset \cS \\ \card{\forget} \leq f}} \norm{\ttheta_{\cS }^\star - \ttheta_{\cS \setminus \forget}^\star}
    \leq \frac{2Rf}{\mu n}.
\end{equation*}
Thus, ignoring dependencies on $L, \mu$, the unlearning time complexity is
\begin{equation*}
    \cO{\bigg(nd \log{\bigg(1 + \frac{d}{\alpha \varepsilon} \left(\frac{Rf}{n}\right)^2\bigg)}\bigg)}
\end{equation*}
    This concludes the proof.
\end{proof}

\textcolor{black}{
\begin{remark}[Practical implementation of Algorithm~\ref{alg:general} with gradient descent]
\label{rk:practical}
    There are two practical scenarios where we can compute an upper bound on the number of optimization iterations needed to reach a predefined precision (in terms of squared distance to the empirical risk minimizer). Consider the optimizer to be gradient descent here for clarity, and denote the empirical loss $\loss_{\mathrm{emp}}$ 
, and the corresponding empirical risk minimizer $\ttheta_{\star, \mathrm{emp}}$.\\
The first scenario is when the loss function is non-negative (or some global lower bound is known); this is quite common in machine learning, e.g., quadratic loss, cross-entropy loss, hinge loss, etc... In this case, we know that for any initial model $\ttheta_0 \in \R^d$ 
, we have $\norm{\ttheta_0 - \ttheta_{\star, \mathrm{emp}}}^2 \leq \frac{2}{\mu}(\loss_{\mathrm{emp}}(\ttheta_0) - \loss_{\mathrm{emp}}(\ttheta_{\star, \mathrm{emp}})) \leq \frac{2}{\mu}\loss_{\mathrm{emp}}(\mathbf{\ttheta}_0),$
where the first inequality is due to $\mu$-strong convexity, and the second to the loss being non-negative. Therefore, knowing only the loss at the initial model, and (a lower bound on) the strong convexity parameter, e.g., 
$\ell_2$-regularization factor, we have a computable upper bound on the initialization error. The upper bound on the number of iterations follows directly from standard first-order convergence analyses, e.g., see Theorem 2.1.15 of~\cite{nesterov2018lectures}, and is computable knowing the aforementioned bound on the initialization error, and the smoothness and strong convexity constants.
The second scenario is when the parameter space is bounded, and in which case we use the projected variant of gradient descent, analyzed in~\citet{neel2021descent} for empirical risk minimization. There, we know that the initialization error is bounded by the diameter of the parameter space, which is computable. A computable upper bound on the number of iterations needed follows with a similar argument as the first scenario above.
\end{remark}
}

\section{Proof of Proposition~\ref{prop:negative}}
\label{app:prop2}
\finetune*
\begin{proof}
Consider the data space $\cZ = \R^d$ and the quadratic loss $\ell(\ttheta; \zz) = \tfrac{1}{2}\norm{\ttheta - \zz}^2, \forall \ttheta, \zz \in \R^d$.
This loss function is $1$-strongly convex and $1$-smooth at any data point.
Fix $\zz_f \coloneqq \ttheta_{\star, \retain} + n\sqrt{\Delta} \cdot \uu$ for some unit vector $\uu \in \R^d, \norm{\uu}=1$.
First, observe that
\begin{equation*}
    \ttheta_{\star,\retain} = \argmin_{\ttheta \in \R^d} \left \{\loss(\ttheta; \retain) = \frac{1}{2\card{\retain}}\sum_{\zz \in \retain} \norm{\ttheta - \zz}^2 \right \}
    = \frac{1}{\card{\retain}} \sum_{\zz \in \retain} \zz,
\end{equation*}
and similarly $\ttheta_{\star,\retain \cup \{\zz_f\}} = \argmin_{\ttheta \in \R^d} \loss(\ttheta; \retain \cup \{\zz_f\}) = \frac{1}{\card{\retain}+1} \sum_{\zz \in \retain \cup \{\zz_f\}} \zz$.
Therefore, we have
\begin{align*}
    \ttheta_{\star,\retain \cup \{\zz_f\}} 
    &=\frac{1}{\card{\retain}+1} \sum_{\zz \in \retain \cup \{\zz_f\}} \zz
    = \frac{1}{\card{\retain}+1} \left( \zz_f + \sum_{\zz \in \retain} \zz \right)\\
    &= \frac{1}{\card{\retain}+1} \left( \zz_f + \card{\retain} \ttheta_{\star,\retain} \right)
    = \frac{1}{n} \left( \zz_f + (n-1) \ttheta_{\star,\retain} \right).
\end{align*}
Finally, thanks to the choice $\zz_f \coloneqq \ttheta_{\star, \retain} + n\sqrt{\Delta} \cdot \uu, \norm{\uu} = 1$, we conclude
\begin{align*}
    \norm{\ttheta_{\star,\retain \cup \{\zz_f\}} - \ttheta_{\star,\retain}}^2 = \frac{1}{n^2} \norm{\zz_f  - \ttheta_{\star,\retain}}^2 = \Delta.
\end{align*}
\end{proof}

\section{Proof of Theorem~\ref{th2}}
\label{app:th2}
\begin{lemma}
\label{lem:tmean}
Let $n \in \mathbb{N}^*$ and $f < n/2$.
For any $\gg_1, \ldots, \gg_n \in \R^d$ and any $\cI \subseteq [n]$ of size $\card{\cI} \geq n-f$, we have
\begin{align}
\norm{\mathrm{TM}_f{(\gg_1, \ldots, \gg_n)}-\overline{\gg}_\cI}^2
    \leq \frac{6f}{n-2f}\left(1+\frac{f}{n-2f}\right) \frac{1}{\card{\cI}}\sum_{i \in \cI} \norm{\gg_i-\overline{\gg}_\cI}^2,
\end{align}
where we denote the average $\overline{\gg}_\cI \coloneqq \frac{1}{\card{\cI}}\sum_{i \in \cI} \gg_i$.
\end{lemma}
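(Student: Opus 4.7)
The inequality splits coordinate by coordinate: the squared Euclidean norm on the left decomposes as a sum of squared scalar errors, the coordinate-wise trimmed mean operates independently at each coordinate, and the right-hand side is a sum of coordinate variances. It therefore suffices to prove the scalar analogue at a fixed coordinate $k$ and sum over $k$ at the end. Accordingly, fix a coordinate $k$ and set $t_i := g_i^{(k)} - \overline{g}_\cI^{(k)}$, so that $\sum_{i \in \cI} t_i = 0$ by construction. Let $T \subseteq [n]$ be the $n - 2f$ retained indices at coordinate $k$, and $H$, $L$ be the top-$f$ and bottom-$f$ trimmed ones, and partition each by membership in $\cI$: $H_\cI := H \cap \cI$, $L_\cI := L \cap \cI$, $T_B := T \cap ([n]\setminus\cI)$.

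The hinge of the argument is the elementary combinatorial observation that, since $|[n]\setminus\cI| \leq f$, one has
\begin{equation*}
    |T_B| \;\leq\; \min\bigl(|H_\cI|,\,|L_\cI|\bigr),
\end{equation*}
which allows fixing two injections $h : T_B \hookrightarrow H_\cI$ and $\ell : T_B \hookrightarrow L_\cI$. By the very definition of trimming, $t_{\ell(b)} \leq t_b \leq t_{h(b)}$ for every $b \in T_B$, so the pointwise bound $t_b^2 \leq t_{h(b)}^2 + t_{\ell(b)}^2$, summed over $b$ using injectivity, yields $\sum_{b \in T_B} t_b^2 \leq \sum_{i \in H_\cI \cup L_\cI} t_i^2$.

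Exploiting $\sum_{i \in \cI} t_i = 0$, I rewrite the trimmed-sum deviation as
\begin{equation*}
    \sum_{i \in T} t_i \;=\; -\!\!\sum_{i \in H_\cI \cup L_\cI}\!\! t_i \;+\; \sum_{b \in T_B} t_b ,
\end{equation*}
and apply Cauchy--Schwarz together with the pairing bound above: $\bigl(\sum_{H_\cI \cup L_\cI} t_i\bigr)^2 \leq 2f \sum_{H_\cI \cup L_\cI} t_i^2$ and $\bigl(\sum_{b \in T_B} t_b\bigr)^2 \leq |T_B|\,\sum_{H_\cI \cup L_\cI} t_i^2$. Combining these via a Young-type inequality $(a+b)^2 \leq (1+\rho) a^2 + (1 + \tfrac{1}{\rho}) b^2$ for a suitable $\rho>0$, bounding $\sum_{H_\cI \cup L_\cI} t_i^2 \leq \sum_{i \in \cI} t_i^2$, dividing by $(n-2f)^2$, summing over coordinates, and finally reintroducing the average $\tfrac{1}{|\cI|}\sum_{i \in \cI}$ and optimizing the resulting prefactor over the admissible cardinalities $|T_B| \in [0, f]$ and $|\cI| \in [n-f, n]$ in the regime $f \leq n/3$ produces the stated coefficient $\tfrac{6f}{n-2f}\bigl(1+\tfrac{f}{n-2f}\bigr)$. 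The principal obstacle is the combinatorial pairing inequality $|T_B| \leq \min(|H_\cI|,|L_\cI|)$: without it, the adversarial indices surviving the trim could not be dominated by the variance of the honest points, and the bound would lose its $f/n$ scaling characteristic of trimmed-mean estimators.
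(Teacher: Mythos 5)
Your proof takes a genuinely different route from the paper's. The paper does not re-derive the trimmed-mean bound at all: it writes $\overline{\gg}_\cI$ as the expectation of $\overline{\gg}_\cT$ over a uniformly random subset $\cT \subseteq \cI$ of size exactly $n-f$, applies Jensen's inequality to reduce to the case $\card{\cT}=n-f$, invokes a previously published bound for that case (Proposition~2 of Allouah et al., 2023), and converts back to the variance over $\cI$ via a bias--variance decomposition. Your argument is instead a self-contained, from-scratch derivation: the coordinate-wise reduction, the partition into trimmed and retained indices, the injection of the bad retained indices $T_B$ into $H_\cI$ and $L_\cI$ (your combinatorial inequality $\card{T_B}\le\min(\card{H_\cI},\card{L_\cI})$ is correct, since $T_B$, $H\setminus\cI$ and $L\setminus\cI$ partition $[n]\setminus\cI$, which has at most $f$ elements), the order-statistics sandwich, and Cauchy--Schwarz are precisely the ingredients behind the cited result. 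What you gain is independence from the external citation and a direct treatment of $\card{\cI}>n-f$; what the paper's reduction gains is brevity and automatic constant-correctness, since the coefficient is inherited unchanged from the $\card{\cT}=n-f$ case.

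The one genuine gap is in the constant bookkeeping. You only claim the stated coefficient "in the regime $f\le n/3$," whereas the lemma asserts it for all $f<n/2$; and optimizing over "$\card{T_B}\in[0,f]$" as written does not close the bound on that larger range. Concretely, with $j:=n-\card{\cI}$ and $b:=\card{T_B}$, your chain yields $\bigl(\sqrt{2f}+\sqrt{b}\bigr)^2(n-j)$ against the target $6f(n-f)$, and for $n=10$, $f=4$, $j=b=3$ this evaluates to $(11+4\sqrt{6})\cdot 7\approx 145.6>144$. The missing observation is the trivial constraint $T_B\subseteq T$, hence $b\le n-2f$, which rules out that configuration. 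Once this cap is added (together with $b\le j$), the critical boundary case $j=b=n-2f$ reduces to the inequality $8f(n-2f)\le(2n-3f)^2$, i.e.\ $(2n-5f)^2\ge 0$, which holds for all $f<n/2$ with equality at $f=2n/5$ --- strongly suggesting the constant $6$ in the lemma is calibrated exactly to this argument. So your architecture is sound, but the final optimization needs the extra constraint $\card{T_B}\le n-2f$ and a check of the remaining admissible $(j,b)$ pairs to cover the lemma's full stated range rather than only $f\le n/3$.
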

\begin{proof}
    Let $n \in \mathbb{N}^*$ and $f < n/2$.
Fix vectors $\gg_1, \ldots, \gg_n \in \R^d$ and subset $\cI \subseteq [n]$ of size $\card{\cI} = n-f$.
For any set $\cT \subseteq [n]$, we denote $\overline{\gg}_\cI \coloneqq \frac{1}{\card{\cT}}\sum_{i \in \cT} \gg_i \coloneqq \E_{i \sim \cT} [\gg_i]$; the last notation is handy and refers to the expectation over the uniform sampling of $i$ from the set $\cT$.
First, observe that since each element in $\cI$ has equal probability of belonging to a uniformly random subset $\cT \subseteq \cI, \card{\cT} = n-f$, we have
\begin{align}
    \overline{\gg}_\cI \coloneqq \frac{1}{\card{\cI}}\sum_{i \in \cI} \gg_i = \E_{i \sim \cI} [\gg_i] = \E_{\substack{\cT \sim \cI \\ \card{\cT} = n-f}} [\overline{\gg}_\cT],
    \label{eq:equipro}
\end{align}
where we recall that the last notation is the expectation over the uniform sampling over subsets of $\cI$ of size $n-f$.
Therefore, using Jensen's inequality, we have
\begin{align*}
\norm{\mathrm{TM}_f{(\gg_1, \ldots, \gg_n)}-\overline{\gg}_\cI}^2
    &= \norm{\mathrm{TM}_f{(\gg_1, \ldots, \gg_n)}-\E_{\substack{\cT \sim \cI \\ \card{\cT} = n-f}} [\overline{\gg}_\cT]}^2\\
    & \leq \E_{\substack{\cT \sim \cI \\ \card{\cT} = n-f}} \norm{\mathrm{TM}_f{(\gg_1, \ldots, \gg_n)}- \overline{\gg}_\cT}^2.
\end{align*}
Now, recall from \citep[Proposition~2]{allouah2023fixing} that for every $\cT \subseteq [n], \card{\cT} = n-f$, we have
\begin{align}
\norm{\mathrm{TM}_f{(\gg_1, \ldots, \gg_n)}-\overline{\gg}_\cT}^2
    \leq \frac{6f}{n-2f}\left(1+\frac{f}{n-2f}\right) \frac{1}{\card{\cT}}\sum_{i \in \cT} \norm{\gg_i-\overline{\gg}_\cT}^2.
\end{align}
Plugging the above in the previous inequality and taking expectations yields
\begin{align*}
&\norm{\mathrm{TM}_f{(\gg_1, \ldots, \gg_n)}-\overline{\gg}_\cI}^2
    \leq \E_{\substack{\cT \sim \cI \\ \card{\cT} = n-f}} \norm{\mathrm{TM}_f{(\gg_1, \ldots, \gg_n)}- \overline{\gg}_\cT}^2\\
    &\quad \leq \frac{6f}{n-2f}\left(1+\frac{f}{n-2f}\right) \E_{\substack{\cT \sim \cI \\ \card{\cT} = n-f}} \frac{1}{\card{\cT}}\sum_{i \in \cT} \norm{\gg_i-\overline{\gg}_\cT}^2\\
    &\quad =  \frac{6f}{n-2f}\left(1+\frac{f}{n-2f}\right) \E_{\substack{i \sim \cT \\ \cT \sim \cI \\ \card{\cT} = n-f}} \norm{\gg_i-\overline{\gg}_\cT}^2\\
    &\quad \leq  \frac{6f}{n-2f}\left(1+\frac{f}{n-2f}\right) \E_{\substack{i \sim \cI}} \norm{\gg_i-\overline{\gg}_\cI}^2\\
    &\quad= \frac{6f}{n-2f}\left(1+\frac{f}{n-2f}\right) \frac{1}{\card{\cI}}\sum_{i \in \cI} \norm{\gg_i-\overline{\gg}_\cI}^2.
\end{align*}
The third equality above is due to the same argument of~\eqref{eq:equipro}, bias-variance decomposition, and Jensen's inequality.
This concludes the proof.
\end{proof}

\begin{restatable}{lemma}{tgdconvergence}
    Assume that, for every $\zz \in \cZ$, the loss $\ell(\cdot~;\zz)$ is $\mu$-strongly convex and $L$-smooth.
    Let $\ttheta_0 \in \R^d$, $f \leq n \min{\left\{\tfrac{1}{3}, \tfrac{12\mu}{5(L-\mu)}\right\}}$,
    and consider the training Algorithm~\ref{alg:tgd}.
    Then, for any $T \geq 1$ and any $\forget \in \cZ^*, \card{\forget} \leq f$, we have 
    \begin{align}
        \loss{(\ttheta_{T}; \retain)} - \loss_{\star, \retain} 
        \leq \frac{45f}{\mu n} \frac{1}{\card{\retain}} \sum_{\zz \in \retain} \norm{\nabla{\ell{(\ttheta^{\star}_{\retain}; \zz)}}}^2 + \exp{\left(-\frac{\mu}{2L}T\right)}\left(\loss{(\ttheta_{0}; \retain)} - \loss_{\star, \retain}\right),
    \end{align}
    where we denoted 
    $\ttheta^{\star}_{\retain} \coloneqq \argmin_{\ttheta \in \R^d} \loss(\ttheta; \retain)$.

    \label{lem:tgd}
\end{restatable}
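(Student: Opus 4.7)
The plan is to set up a linear recursion for $\delta_t \coloneqq \loss(\ttheta_t;\retain) - \loss_{\star,\retain}$ of the form $\delta_{t+1} \leq (1-\mu/(2L)) \delta_t + \gamma$ with $\gamma = \cO(f/(Ln))\cE(\retain)$, and then unroll it $T$ times. Since each per-sample loss is $L$-smooth, so is $\loss(\cdot;\retain)$, and the update $\ttheta_{t+1} = \ttheta_t - (1/L)\rr_t$ yields the classical descent inequality $\loss(\ttheta_{t+1};\retain) \leq \loss(\ttheta_t;\retain) - \frac{1}{2L}\norm{\nabla\loss(\ttheta_t;\retain)}^2 + \frac{1}{2L}\norm{\rr_t - \nabla\loss(\ttheta_t;\retain)}^2$, which cleanly separates progress toward $\loss_{\star,\retain}$ from an aggregation-error term measuring how far the trimmed mean $\rr_t$ deviates from the true retain-set gradient.

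To control this aggregation error, I would apply Lemma~\ref{lem:tmean} with $\cI$ being the indices of $\retain$ (so $\card{\cI} \geq n-f$) and $\gg_i = \nabla\ell(\ttheta_t; \zz_i)$. Observing that $\overline{\gg}_\cI = \nabla\loss(\ttheta_t;\retain)$, the lemma delivers a bound of order $(f/n) \cdot \mathrm{Var}_\retain$ under $f \leq n/3$, where $\mathrm{Var}_\retain \coloneqq \frac{1}{\card{\retain}}\sum_{\zz \in \retain}\norm{\nabla\ell(\ttheta_t;\zz) - \nabla\loss(\ttheta_t;\retain)}^2$. The crucial next step is to bound $\mathrm{Var}_\retain$ in terms of $\delta_t$ and $\cE(\retain)$. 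Splitting $\nabla\ell(\ttheta_t;\zz)$ as $(\nabla\ell(\ttheta_t;\zz) - \nabla\ell(\ttheta^\star_\retain;\zz)) + \nabla\ell(\ttheta^\star_\retain;\zz)$ via Young's inequality, and invoking the standard convex-smooth bound $\norm{\nabla\ell(\ttheta_t;\zz) - \nabla\ell(\ttheta^\star_\retain;\zz)}^2 \leq 2L \, (\ell(\ttheta_t;\zz) - \ell(\ttheta^\star_\retain;\zz) - \lin{\nabla\ell(\ttheta^\star_\retain;\zz), \ttheta_t - \ttheta^\star_\retain})$, the linear-in-$\ttheta$ term collapses upon averaging over $\retain$ since $\nabla\loss(\ttheta^\star_\retain;\retain) = 0$, yielding $\mathrm{Var}_\retain \leq c_1 L \delta_t + c_2 \cE(\retain)$ for explicit constants depending on the Young parameter.

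Combining these estimates with the Polyak--{\L}ojasiewicz inequality $\norm{\nabla\loss(\ttheta_t;\retain)}^2 \geq 2\mu \delta_t$ (a consequence of $\mu$-strong convexity) gives the linear recursion $\delta_{t+1} \leq (1 - \mu/L + c'_1 f/n)\delta_t + c'_2 \frac{f}{Ln}\cE(\retain)$. The hypothesis $f \leq n \min\{1/3, 12\mu/(5(L-\mu))\}$ is calibrated precisely so that the contraction coefficient is at most $1 - \mu/(2L)$. Unrolling and using $(1-\mu/(2L))^T \leq \exp(-\mu T/(2L))$ bounds the transient by $\exp(-\mu T/(2L))\delta_0$, while the residual is controlled by the geometric sum $\frac{c'_2 f/(Ln)}{\mu/(2L)}\cE(\retain) = \frac{2 c'_2 f}{\mu n}\cE(\retain)$, producing the form of the claim with the final constant $45$ after the right choice of Young parameter. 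The main obstacle is constant-tracking: the Young parameter in the variance bound must be chosen so that $c'_1 f/n \leq \mu/(2L)$ holds under the hypothesis while keeping $c'_2$ bounded, and the specific cutoff $12\mu/(5(L-\mu))$ for $f/n$ is exactly the calibration achieving this balance simultaneously with the $f/n \leq 1/3$ requirement needed for the trimmed-mean prefactor to simplify.
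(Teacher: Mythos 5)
Your proposal follows essentially the same route as the paper's proof: the smoothness descent inequality with step size $1/L$ isolating the aggregation error $\norm{\rr_t - \nabla\loss(\ttheta_t;\retain)}^2$, Lemma~\ref{lem:tmean} applied to the retain indices, a bound on the gradient variance over $\retain$ in terms of the optimality gap and the interpolation error $\cE(\retain)$, the PL inequality to close the linear recursion, and a geometric-sum unrolling. The only cosmetic difference is that you derive the variance bound inline via Young's inequality and the co-coercivity bound at $\ttheta^\star_\retain$, whereas the paper cites this as \citep[Proposition~1]{allouah2024robust}; the argument is the same, and your remaining work is indeed just the constant calibration you identify.
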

\begin{proof}
    Let $t \geq 0$, $f \leq n \min{\left\{\tfrac{1}{3}, \tfrac{12\mu}{5(L-\mu)}\right\}}$, and $\retain, \forget \in \cZ^*$ such that $\card{\forget} = n - \card{\retain} \leq f$. 
Recall that $\loss(\cdot~; \cS \setminus \forget)$ is $L$-smooth by assumption.
From Algorithm~\ref{alg:tgd}, recall that $\ttheta_{t+1} = \ttheta_{t} - \gamma \rr_t$ with $\rr_t \coloneqq \mathrm{TM}_f{\left(\nabla{\ell{(\ttheta_{t}; \zz_1)}}, \ldots, \nabla{\ell{(\ttheta_{t}; \zz_n)}}\right)}$. 
Hence, by the smoothness assumption, we have 
\begin{align}
\label{eqn:thm_lip_1}
    \loss{(\ttheta_{t+1}; \cS \setminus \forget)} - \loss{(\ttheta_{t}; \cS \setminus \forget)}
    \leq -\gamma \lin{\nabla{\loss{(\ttheta_{t}; \cS \setminus \forget)}},\rr_t} + \frac{1}{2}\gamma^2 L\norm{\rr_t}^2.
\end{align}
Moreover, we recall the identity
\begin{align*}
    \lin{\nabla{\loss{(\ttheta_{t}; \cS \setminus \forget)}},\rr_t} = \frac{1}{2}\left(\norm{\nabla{\loss{(\ttheta_{t}; \cS \setminus \forget)}}}^2 + \norm{\rr_t}^2 - \norm{\nabla{\loss{(\ttheta_{t}; \cS \setminus \forget)}}-\rr_t}^2 \right).
\end{align*}
Substituting the above in~\eqref{eqn:thm_lip_1} we obtain that
\begin{align*}
    &\loss{(\ttheta_{t+1}; \cS \setminus \forget)} - \loss{(\ttheta_{t}; \cS \setminus \forget)}\\
    &\qquad\leq  
    -\frac{\gamma}{2}\left(\norm{\nabla{\loss{(\ttheta_{t}; \cS \setminus \forget)}}}^2 + \norm{\rr_t}^2 - \norm{\nabla{\loss{(\ttheta_{t}; \cS \setminus \forget)}}-\rr_t}^2\right) + \frac{1}{2}\gamma^2 L\norm{\rr_t}^2\\\nonumber
    &\qquad= -\frac{\gamma}{2}\norm{\nabla{\loss{(\ttheta_{t}; \cS \setminus \forget)}}}^2 -\frac{\gamma}{2}(1-\gamma L)\norm{\rr_t}^2 +\frac{\gamma}{2}\norm{\nabla{\loss{(\ttheta_{t}; \cS \setminus \forget)}}-\rr_t}^2 .
\end{align*}
Substituting $\gamma = \frac{1}{L}$ in the above we obtain that
\begin{align}
    \loss{(\ttheta_{t+1}; \cS \setminus \forget)} - \loss{(\ttheta_{t}; \cS \setminus \forget)} \leq -\frac{1}{2L}\norm{\nabla{\loss{(\ttheta_{t})}}}^2 +\frac{1}{2L}\norm{\rr_t - \nabla{\loss{(\ttheta_{t}; \cS \setminus \forget)}}}^2. \label{eq1}
\end{align}
By applying Lemma~\ref{lem:tmean} to the vectors $\nabla{\ell{(\ttheta_{t}; \zz_1)}}, \ldots, \nabla{\ell{(\ttheta_{t}; \zz_n)}}$ and the indices set $\cI \coloneqq \{ i \in  [n] \colon \zz_i \in \retain \}$, and denoting $\kappa \coloneqq \frac{6f}{n-2f}\left(1+\frac{f}{n-2f}\right)$, we obtain
\begin{align}
    \norm{\rr_t - \nabla{\loss{(\ttheta_{t}; \cS \setminus \forget)}}}^2
    &= \norm{\mathrm{TM}_f{\left(\nabla{\ell{(\ttheta_{t}; \zz_1)}}, \ldots, \nabla{\ell{(\ttheta_{t}; \zz_n)}}\right)} - \frac{1}{\card{\cI}} \sum_{j \in \cI} \nabla{\ell{(\ttheta_{t}; \zz_j)}}}^2\nonumber\\
    &\leq \frac{\kappa}{\card{\cI}} \sum_{i \in \cI} \norm{\nabla{\ell{(\ttheta_{t}; \zz_i)}}-\frac{1}{\card{\cI}} \sum_{j \in \cI} \nabla{\ell{(\ttheta_{t}; \zz_j)}}}^2\\ 
    &= \frac{\kappa}{\card{\cI}} \sum_{i \in \cI} \norm{\nabla{\ell{(\ttheta_{t}; \zz_i)}}-\nabla{\loss{(\ttheta_{t}; \cS \setminus \forget)}}}^2.
    \label{eqn:thm_rt_ht}
\end{align}
Besides, by denoting $\zeta_\star^2 \coloneqq \frac{2}{\card{\retain}} \sum_{\zz \in \retain} \norm{\nabla{\ell{(\ttheta^{\star}_{\retain}; \zz)}}}^2$ and $P \coloneqq \frac{2 L}{\mu}$, we have thanks to strong convexity and smoothness (see, e.g.,~\citep[Proposition~1]{allouah2024robust}) that
\begin{equation}
    \frac{1}{\card{\cI}} \sum_{i \in \cI} \norm{\nabla{\ell{(\ttheta_{t}; \zz_i)}}}^2 -\norm{\nabla{\loss{(\ttheta_{t}; \cS \setminus \forget)}}}^2  
    \leq \zeta_\star^2 + (P-1) \norm{\nabla{\loss{(\ttheta_t; \cS \setminus \forget)}}}^2.
\end{equation}

\begin{align*}
    \frac{1}{\card{\cI}} \sum_{i \in \cI} \norm{\nabla{\ell{(\ttheta_{t}; \zz_i)}}-\nabla{\loss{(\ttheta_{t}; \cS \setminus \forget)}}}^2 
    &= \frac{1}{\card{\cI}} \sum_{i \in \cI} \norm{\nabla{\ell{(\ttheta_{t}; \zz_i)}}}^2 -\norm{\nabla{\loss{(\ttheta_{t}; \cS \setminus \forget)}}}^2\\ 
    &\leq \zeta_\star^2 + (P-1) \norm{\nabla{\loss{(\ttheta_t; \cS \setminus \forget)}}}^2.
\end{align*}
Using the above in~\eqref{eqn:thm_rt_ht} yields
\begin{align*}
    \norm{\rr_t - \nabla{\loss{(\ttheta_{t}; \cS \setminus \forget)}}}^2
    &\leq \kappa \zeta_\star^2 + \kappa(P-1) \norm{\nabla{\loss{(\ttheta_t; \cS \setminus \forget)}}}^2.
\end{align*}
Substituting the above in~\eqref{eq1} yields
\begin{equation}
\label{eqtemp}
    \loss{(\ttheta_{t+1}; \cS \setminus \forget)} - \loss{(\ttheta_{t}; \cS \setminus \forget)} \leq -\frac{1}{2L}\norm{\nabla{\loss{(\ttheta_{t}; \cS \setminus \forget)}}}^2 +\frac{1}{2L}\left(\kappa \zeta_\star^2 + \kappa(P-1) \norm{\nabla{\loss{(\ttheta_t); \cS \setminus \forget}}}^2\right).
\end{equation}
Multiplying both sides in~\eqref{eqtemp} by $2L$ and rearranging terms, we get
\begin{align*}
    \left(1-\kappa (P-1)\right)\norm{\nabla{\loss{(\ttheta_{t}; \cS \setminus \forget)}}}^2
    &\leq
    \kappa \zeta_\star^2 +
    2L\left(\loss{(\ttheta_{t}; \cS \setminus \forget)} - \loss{(\ttheta_{t+1}; \cS \setminus \forget)}\right)\\
    &= \kappa \zeta_\star^2 +
    2L\left(\loss{(\ttheta_{t})} - \loss_{\star, \cS \setminus \forget} + \loss_{\star, \cS \setminus \forget} - \loss{(\ttheta_{t+1})}\right).
\end{align*}
After rearranging terms, and using strong convexity to lower bound the norm of the gradient with the optimality gap, e.g., see~\citep{karimi2016linear}, we obtain
\begin{align*}
    2L\left(\loss{(\ttheta_{t+1}; \cS \setminus \forget)} - \loss_{\star, \cS \setminus \forget}\right)
    &\leq \kappa \zeta_\star^2 - 2\mu\left(1-\kappa (P-1)\right)\left(\loss{(\ttheta_{t}; \cS \setminus \forget)} - \loss_{\star, \cS \setminus \forget}\right) \\
    &\quad+ 2L\left(\loss{(\ttheta_{t}; \cS \setminus \forget)} - \loss_{\star, \cS \setminus \forget}\right)\\
    &= \kappa \zeta_\star^2 + \left(2L-2\mu\left(1-\kappa (P-1)\right)\right)\left(\loss{(\ttheta_{t}; \cS \setminus \forget)} - \loss_{\star, \cS \setminus \forget}\right).
\end{align*}
Dividing both sides by $2L$, we get 
\begin{align}
\label{eqtemp2}
    \loss{(\ttheta_{t+1}; \cS \setminus \forget)} - \loss_{\star, \cS \setminus \forget}
    \leq \frac{\kappa \zeta_\star^2}{2L} + \left(1-\frac{\mu}{L}\left(1-\kappa (P-1)\right)\right)\left(\loss{(\ttheta_{t}; \cS \setminus \forget)} - \loss_{\star, \cS \setminus \forget}\right).
\end{align}
Then, applying~\eqref{eqtemp2} recursively for time indices in $k \in \{0,\ldots,t-1\}$ yields
\begin{align*}
    &\loss{(\ttheta_{t+1}; \cS \setminus \forget)} - \loss_{\star, \cS \setminus \forget}
    \leq \frac{\kappa \zeta_\star^2}{2L}\sum_{k=0}^{t}\left(1-\frac{\mu}{L}\left(1-\kappa (P-1)\right)\right)^k \\
    &\qquad+ \left(1-\frac{\mu}{L}\left(1-\kappa (P-1)\right)\right)^{t+1}\left(\loss{(\ttheta_{0}; \cS \setminus \forget)} - \loss_{\star, \cS \setminus \forget}\right)\\
    &\quad\leq \frac{\kappa \zeta_\star^2}{2L} \frac{1}{1-\left(1-\frac{\mu}{L}\left(1-\kappa (P-1)\right)\right)} + \left(1-\frac{\mu}{L}\left(1-\kappa (P-1)\right)\right)^{t+1}\left(\loss{(\ttheta_{0}; \cS \setminus \forget)} - \loss_{\star, \cS \setminus \forget}\right)\\
    &\quad= \frac{\kappa \zeta_\star^2}{2\mu\left(1-\kappa (P-1)\right)} + \left(1-\frac{\mu}{L}\left(1-\kappa (P-1)\right)\right)^{t+1}\left(\loss{(\ttheta_{0}; \cS \setminus \forget)} - \loss_{\star, \cS \setminus \forget}\right).
\end{align*}
We now take $t = T-1 \geq 0$. Moreover, using the fact that $(1+x)^n \leq e^{nx}$ for all $x \in \R$ and 
remarking that $\tfrac{\kappa}{1-\kappa (P-1)} \leq 45 \tfrac{f}{n}$ and $1-\kappa (P-1) \geq \tfrac{1}{2}$ when $\tfrac{f}{n} \leq \min{\left\{\tfrac{1}{3}, \tfrac{12}{5(P-1)}\right\}}$ yields that
\begin{align*}
    \loss{(\ttheta_{T}; \cS \setminus \forget)} - \loss_{\star, \cS \setminus \forget}
    &\leq \frac{45}{2\mu}\frac{f}{n} \zeta_\star^2 + \exp{\left(-\frac{\mu}{2L}T\right)}\left(\loss{(\ttheta_{0}; \cS \setminus \forget)} - \loss_{\star, \cS \setminus \forget}\right).
\end{align*}
This concludes the proof.
\end{proof}

\nolipschitz*
\begin{proof}
Let $q>1, \varepsilon, \alpha, \textcolor{black}{\alpha_\mathrm{emp}}, \alpha'>0$, $0\leq f < n$, and $\retain \in \cZ^{n-f}, \cS \coloneqq \retain \cup \forget \in \cZ^n$.
Assume that the loss function is $\mu$-strongly convex and $L$-smooth at any data point, and that $\varepsilon \leq d$.
Denote by $\ttheta_{\retain}^\star$ and $\ttheta_{\cS}^\star$ the minimizers of the empirical loss functions $\loss(\cdot~; \retain)$ and $\loss(\cdot~; \cS)$, respectively. These exist and are well-defined by strong convexity of the loss function.
Also, following Algorithm~\ref{alg:general-robust}, denote by $\ttheta_{\cS}^{\cA_f}$ and $\ttheta_{\retain}^{\cA}$ the model obtained using $\cA_f$ over the training set $\cS$ and obtained using $\cA_0$ (i.e., empty forget set $f=0$) over the training set $\retain$, respectively.
Moreover, denote by $\ttheta^\cU$ the model obtained after using $\cU$ over the training set $\retain$ before Gaussian noise addition.

Recall that the computational cost of reaching the empirical risk minimizer up to squared error $\textcolor{black}{\alpha_\mathrm{emp}}>0$, starting with an initialization error $\Delta>0$, with gradient on smooth strongly convex problems is $\cO(nd\tfrac{L}{\mu}\log(\tfrac{\Delta}{\textcolor{black}{\alpha_\mathrm{emp}}}))$~\citep{nesterov2018lectures}.
Therefore, letting $\Delta \coloneqq \norm{\ttheta_0 - \ttheta_{\retain}^\star}^2$, at the computational cost of $\cO(nd\tfrac{L}{\mu}\log(\tfrac{L d \Delta}{\textcolor{black}{\alpha_\mathrm{emp}} \varepsilon}))$ during training and $\max_{\substack{\forget \subset \cS \\ \card{\forget} \leq f}}   \cO(nd\tfrac{L}{\mu}\log(\tfrac{L d\norm{\ttheta_{\cS}^{\cA_f} - \ttheta_{\retain}^\star}^2}{\textcolor{black}{\alpha_\mathrm{emp}} \varepsilon}))$ during unlearning since we initialize at $\ttheta_{\cS}^{\cA_f}$, we have by definition
\begin{align}
    \max_{\substack{\forget \subset \cS \\ \card{\forget} \leq f}}  \norm{\ttheta^\cU - \ttheta_{\retain}^\star}^2 
    \leq \frac{\textcolor{black}{\alpha_\mathrm{emp}}\varepsilon}{4Ld},\,
    \norm{\ttheta_{\retain}^\cA - \ttheta_{\retain}^\star}^2  \leq \frac{\textcolor{black}{\alpha_\mathrm{emp}}\varepsilon}{4Ld}.
    \label{ineq:errors2}
\end{align}
Also, recalling the result of Lemma~\ref{lem:tgd} and using strong convexity, we have
\begin{align}
    \max_{\substack{\forget \subset \cS \\ \card{\forget} \leq f}} \norm{\ttheta_{\cS}^{\cA_f} - \ttheta_{\retain}^\star}^2 \leq \frac{2}{\mu}\max_{\substack{\forget \subset \cS \\ \card{\forget} \leq f}}(\loss(\ttheta_{\cS}^{\cA_f}; \retain) - \loss_{\star,\retain})  \leq \frac{2}{\mu} \left(\frac{45f}{\mu n} \frac{1}{\card{\retain}} \sum_{\zz \in \retain} \norm{\nabla{\ell{(\ttheta^{\star}_{\retain}; \zz)}}}^2 + \alpha'\right).
\end{align}
Also, from Lemma~\ref{lem:tgd}, the computational complexity for the statement above is $\cO{(nd\log{(\tfrac{\Delta}{\alpha'})})}$ (to be added during training), as the per-iteration cost involves computing $d$ times a trimmed mean over $n$ real numbers, each of which can be done in worst-case linear time and constant space with variations of the median-of-medians~\citep{blum1973time,lai1988implicit}.
Thus, the computational complexity of unlearning is upper bounded by
\begin{align}
    \cO\left(nd\frac{L}{\mu}\log(\frac{L d\left(\frac{f}{\mu n} \frac{1}{\card{\retain}} \sum_{\zz \in \retain} \norm{\nabla{\ell{(\ttheta^{\star}_{\retain}; \zz)}}}^2 + \alpha'\right)}{\mu\textcolor{black}{\alpha_\mathrm{emp}} \varepsilon})\right).
\end{align}

    \paragraph{Unlearning analysis.}
    Our goal here is to show that $\cU(\forget, \cA_f(\cS))$ and $\cU(\varnothing, \cA_0(\retain))$ are near-indistinguishable in the sense of Definition~\ref{def:removal}.
    To do so, we bound the distance between $\ttheta_{\retain}^\cA$ and $\ttheta^\cU$, and infer the unlearning guarantee via the Rényi divergence bound of the Gaussian mechanism.

Now, using inequalities~\eqref{ineq:errors2} and the triangle inequality, we obtain
\begin{align}
    \norm{\ttheta^\cU - \ttheta_{\retain}^\cA}^2
    \leq 2\norm{\ttheta^\cU - \ttheta_{\retain}^\star}^2 + 2\norm{\ttheta_{\retain}^\cA- \ttheta_{\retain}^\star}^2
    \leq \frac{\textcolor{black}{\alpha_\mathrm{emp}} \varepsilon}{Ld}.\label{ineq4}
\end{align}
Recall that $\cU(\forget, \cA_f(\cS)) \coloneqq \ttheta^\cU + \cN(0, \tfrac{\textcolor{black}{\alpha_\mathrm{emp}}}{2Ld} \mI_d)$ and $\cU(\varnothing, \cA(\retain)) \coloneqq \ttheta_{\cS \setminus \forget}^\cA + \cN(0, \tfrac{\textcolor{black}{\alpha_\mathrm{emp}}}{Ld} \mI_d)$.
Thus, using the Rényi divergence expression between multivariate Gaussians (e.g., see~\citep{gil2013renyi}) we conclude that $(\cU, \cA)$ satisfies $(q, q\varepsilon)$-approximate unlearning:
\begin{align}
    \mathrm{D}_q{\left(\cU(\forget, \cA_f(\cS)) ~\|~ \cU(\varnothing, \cA(\retain)) \right)} &= \frac{q}{2 \cdot \tfrac{\textcolor{black}{\alpha_\mathrm{emp}}}{2Ld}} \norm{\ttheta^\cU - \ttheta_{\retain}^\cA}^2 \leq  q \varepsilon.
    \label{ineq5}
\end{align}
    \paragraph{Utility analysis.}
    We now analyze the empirical and population loss of the model $\cU(\forget, \cA_f(\cS))$.

    Recall that the loss function is $L$-smooth, and that the retain set $\retain$ is fixed.
    Therefore, using inequalities~\eqref{ineq:errors2}, Jensen's inequality and taking expectations over the randomness of the additive Gaussian noise $\cN(0, \tfrac{\textcolor{black}{\alpha_\mathrm{emp}}}{2Ld} \mI_d)$, we can bound the empirical loss:
    \begin{align}
        &\E_{\cU}[\max_{\substack{\forget \in \cZ^* \\ \card{\forget} \leq f}}\loss(\cU(\forget, \cA_f(\cS)); \retain)] - \loss_{\star, \retain} 
        \leq \frac{L}{2}\E_{\cU}[\max_{\substack{\forget \in \cZ^* \\ \card{\forget} \leq f}}\norm{\cU(\forget, \cA_f(\cS)) - \ttheta_{\retain}^\star}^2] \nonumber\\
        &\qquad= \frac{L}{2}\E_{\mX \sim \cN(0, \tfrac{\textcolor{black}{\alpha_\mathrm{emp}}}{2Ld} \mI_d)}[\max_{\substack{\forget \in \cZ^* \nonumber\\ \card{\forget} \leq f}}\norm{\ttheta_\cU - \ttheta_{\retain}^\star + \mX}^2] \\
        &\qquad\leq L \max_{\substack{\forget \in \cZ^* \\ \card{\forget} \leq f}}\norm{\ttheta^\cU - \ttheta_{\retain}^\star}^2 + L \E_{\mX \sim \cN(0, \tfrac{\textcolor{black}{\alpha_\mathrm{emp}}}{2Ld} \mI_d)}\norm{\mX}^2 \nonumber\\
        &\qquad=L\max_{\substack{\forget \in \cZ^* \\ \card{\forget} \leq f}}\norm{\ttheta^\cU - \ttheta_{\retain}^\star}^2 + L d \frac{\textcolor{black}{\alpha_\mathrm{emp}}}{2Ld}
        \leq L \frac{\textcolor{black}{\alpha_\mathrm{emp}} \varepsilon}{4Ld} + \frac{\textcolor{black}{\alpha_\mathrm{emp}}}{2} \leq \textcolor{black}{\alpha_\mathrm{emp}},
        \label{eq:ood-noexp}
    \end{align}
    after using the assumption that $\varepsilon \leq d$ for the last inequality.
    Thus, the expected empirical risk error is at most $\textcolor{black}{\alpha_\mathrm{emp}}$ with the following computational complexities before and during unlearning respectively:
    \begin{align}
        \cO\left(nd\frac{L}{\mu} \max{\left\{\log{(\frac{\Delta}{\alpha'})},\log(\frac{L d \Delta}{\textcolor{black}{\alpha_\mathrm{emp}} \varepsilon})\right\}}\right),\,
        & \cO\left(nd\frac{L}{\mu}\log(\frac{L d\left(\frac{f}{\mu n} \frac{1}{\card{\retain}} \sum_{\zz \in \retain} \norm{\nabla{\ell{(\ttheta^{\star}_{\retain}; \zz)}}}^2 + \alpha'\right)}{\mu\textcolor{black}{\alpha_\mathrm{emp}} \varepsilon})\right).
    \end{align}
    Finally, we set $\alpha' = \tfrac{\textcolor{black}{\alpha_\mathrm{emp}} \mu \varepsilon}{Ld}$ to conclude the first statement of the theorem.
    
    In fact, we can obtain the guarantee~\eqref{eq:ood-noexp} in expectation over $\retain \sim \cD^{n-f}$, at the cost of the expectation of the runtimes above. Indeed, taking expectations over the training set in the standard convergence guarantee of gradient descent for smooth strongly convex problems (e.g.,~\citep[Theorem~2.1.15]{nesterov2018lectures} implies that expected error $\alpha$ with expected initialization error $\E_{\retain}[\Delta]$ can be achieved in $\cO(nd\tfrac{L}{\mu}\log{\tfrac{\E_{\retain}[\Delta]}{\alpha}})$ time.
    That is, we have
    \begin{align}
        \E_{\retain \sim \cD^{n-f}}\left[\max_{\substack{\forget \in \cZ^* \\ \card{\forget} \leq f}}\loss(\cU(\forget, \cA_f(\retain \cup \forget)); \retain) - \loss_{\star, \retain} \right] \leq \alpha,
        \label{eq:exp}
    \end{align}
    with the following training and unlearning time respectively:
    \begin{align*}
        &\cO{\left(nd\log{\left(\frac{d }{\alpha\varepsilon}\E_{\retain \sim \cD^{n-f}}\norm{\ttheta_0 - \ttheta^\star_{\retain}}^2\right)}\right)},\\
        &\cO{\left(nd\log{\left(1+\frac{d}{\alpha \varepsilon}\frac{f}{n} \E_{\retain \sim \cD^{n-f}}\frac{1}{\card{\retain}} \sum_{\zz \in \retain} \norm{\nabla{\ell{(\ttheta^{\star}_{\retain}; \zz)}}}^2\right)}\right)}.
    \end{align*}
    In turn, we can plug the bound~\eqref{eq:exp} in the generalization bound of Proposition~\ref{prop:population}.
    As a result, we have
    \begin{align*}
        \loss_{\mathrm{OOD}}(\cU, \cA) 
        &\leq  \frac{L}{\mu}\E_{\retain \sim \cD^{n-f}}[\max_{\substack{\forget \in \cZ^* \\ \card{\forget} \leq f}} \loss(\cU(\forget, \cA_f(\retain \cup \forget)); \retain) - \loss_{\star, \retain}] + \frac{L}{2\mu^2} \frac{\E_{\zz \sim \cD} \norm{\nabla \ell(\ttheta^\star; \zz)}^2}{n-f}\\
        &\leq \frac{L}{\mu} \alpha
        + \frac{L}{2\mu^2} \frac{\E_{\zz \sim \cD} \norm{\nabla \ell(\ttheta^\star; \zz)}^2}{n-f}.
    \end{align*}
    Therefore, ignoring dependencies in $L,\mu$, the in-distribution population risk is at most $\alpha$ (the factor $\tfrac{L}{\mu}$ in the first term above can be removed at the cost of a logarithmic overhead in the time complexity) when $n-f = \Omega(\tfrac{1}{\alpha})$.
    This concludes the proof.
\end{proof}
\section{Additional Details and Results for Tables~\ref{tab:id} and~\ref{tab:ood} and Figure~\ref{fig:main}}
\label{app:details}

\paragraph{Tables~\ref{tab:id} and~\ref{tab:ood}.}
We recall that Table~\ref{tab:id} presents a summary of the in-distribution deletion capacities (the larger, the better), for error bound $\alpha>0$ and computation budget $T>0$, under approximate unlearning for strongly convex tasks, with smoothness and Lipschitz assumptions.
In particular, we note that the utility deletion capacity of the Newton step algorithm is directly deduced from~\citep[Theorem~3]{sekhari2021remember}, with the following adaption from $(\varepsilon_{\mathrm{DP}}, \delta)$-unlearning~\citep[Definition~2]{sekhari2021remember} to our $(q, q\varepsilon)$-unlearning  formalism for all $q>1$: $\varepsilon \leq  \tfrac{\varepsilon_{\mathrm{DP}}^2}{16 \log(1/\delta)}$ assuming that $\varepsilon_{\mathrm{DP}} \leq \log(1/\delta)$ using~\citep[Proposition~3]{mironov2017renyi}. This adaptation is valid given that the Gaussian mechanism employed~\citep{guo2020certified,sekhari2021remember} satisfies Rényi differential privacy~\citep[Corollary~3]{mironov2017renyi}.
The same adaptation is conducted for the differential privacy method~\citep{huang2023tight}.
Besides, we note that the last two reported computational capacities mean that no sample can be unlearned unless $T$ exceeds the proven time complexity of these algorithms~\citep{bassily2014private,sekhari2021remember}.
Finally, the reported lower bound is a direct adaptation of~\citep[Observation~1.4]{lai2016agnostic}.\\
In both tables~\ref{tab:id} and~\ref{tab:ood}, the computational deletion capacities are obtained by computing the maximum deleted samples $f$ such that the sum of the training and unlearning time complexities is smaller than $T$.
\textcolor{black}{
We believe that comparing the sum of both unlearning and training times is important. For the sake of the argument, a hypothetical (inefficient) training procedure that computes and stores the empirical risk minimizers on every subset of the full dataset would only require constant-time unlearning, which could be misleading if we only compare unlearning times. It is worth noting here that the unlearning time complexity of the work of~\cite{sekhari2021remember} is independent of $n$, although at least quadratic in $d$.
}

\paragraph{Figure~\ref{fig:main}.}
We recall that Figure~\ref{fig:main} presents a numerical validation of our theoretical results on a linear regression task with synthetic data under a fixed approximate unlearning budget, with in-distribution and out-of-distribution data.
Specifically, the full data features are generated from a $d$-dimensional Gaussian $\cN(0, \mI_d), d=100,$ and the labels are generated from the features and a random true underlying model, also from a Gaussian $\cN(0, \mI_d)$, with a Gaussian response.
The in-distribution forget data consists of $f=20$ data points sampled at random from the $10,000$ full training samples.
Moreover, we set the unlearning budget for the in-distribution scenario to $\varepsilon=1$.
We use the DP-SGD implementation of Opacus~\citep{opacus}, and convert to group differential privacy, the group being of size $f$, with standard conversion bounds~\citep[Lemma~2.2]{vadhan2017complexity}, and run the optimizer until convergence or for $100$ epochs (usually $10\times$ more than Algorithm~\ref{alg:general}) with a fine-tuned learning rate.
For the out-of-distribution scenario, the forget data is obtained by shifting labels with a fixed offset set to $10^3$, the total number of training samples being $1,000$.
Moreover, we set the unlearning budget for the in-distribution scenario to $\varepsilon=10$.
The learning rates for Algorithm~\ref{alg:general} and~\ref{alg:tgd} are set following standard theoretical convergence rates~\citep{nesterov2018lectures} for strongly convex tasks, and Theorem~\ref{th2}. The initialization error can be estimated without knowing the distance to the minimizer, since the loss is known to be non-negative in this case, and the strong convexity constant is estimated empirically \textcolor{black}{(see Remark~\ref{rk:practical} for details)}.

\paragraph{\color{black} Empirical validation on real data.}
{\color{black} We extend the empirical validation in Figure~\ref{fig:main} in the out-of-distribution scenario to the \emph{California Housing} dataset~\citep{pace1997sparse}, a standard regression benchmark. The dataset contains $20,640$ samples, each representing a district in California, with features describing demographic and geographic information.
We conduct the same experiment as in Figure~\ref{fig:ood}, described in details in the previous paragraph, and show the results in Figure~\ref{fig:additional}. The conclusions drawn from the synthetic data experiment continue to hold here: when unlearning out-of-distribution data, existing unlearning algorithms encompassed by Algorithm~\ref{alg:general} (with gradient descent, which is very similar to the algorithms of~\cite{neel2021descent} and~\cite{chourasia2023forget}) are slower than Algorithm~\ref{alg:general-robust}, which we recall is much less sensitive to out-of-distribution samples by design. It is also worth noting that Algorithm~\ref{alg:general} is even slower as the forget data fraction increases, as predicted by the theory, and the same holds for Algorithm~\ref{alg:general-robust} although at a lower rate.}
\begin{figure}
    \centering
    \includegraphics[width=0.6\linewidth]{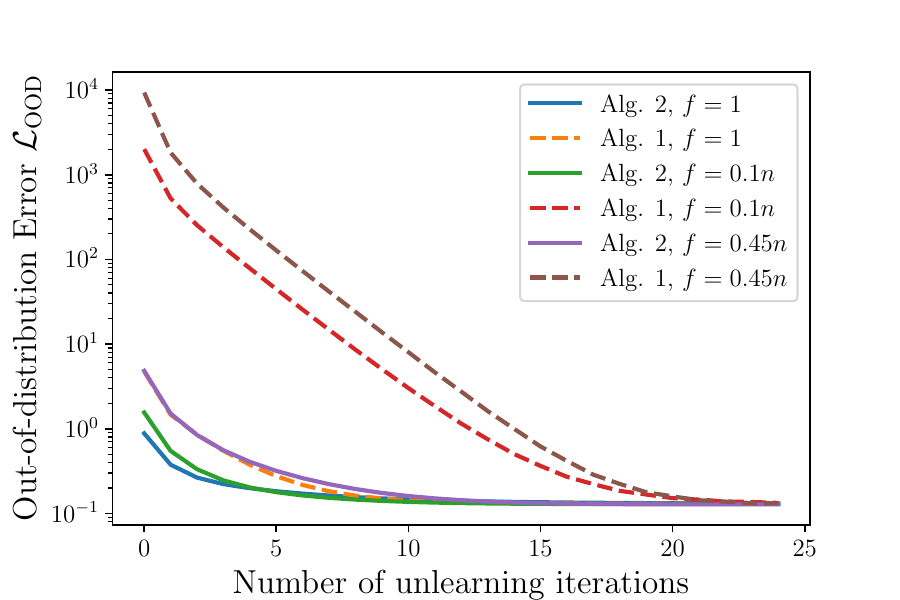}
    \caption{\color{black} Out-of-distribution error $\loss_{\mathrm{OOD}}$ versus number of \emph{unlearning} iterations for Algorithms~\ref{alg:general} and~\ref{alg:tgd}, using gradient descent as optimizer, with $f \in \{1, 0.1 n, 0.45 n\}$ forget data out of $16,512$ samples of the \emph{California Housing} dataset.
        The per-iteration cost is the same for both algorithms.
        The unlearning time of Algorithm~\ref{alg:general} (non-robust) can be $10\times$ slower than Algorithm~\ref{alg:tgd}.}
    \label{fig:additional}
\end{figure}

\end{document}